\def\BibTeX{{\rm B\kern-.05em{\sc i\kern-.025em b}\kern-.08em
    T\kern-.1667em\lower.7ex\hbox{E}\kern-.125emX}}
\newcommand*{\Reals}{\mathbb{R}}
\DeclareMathOperator*{\argmax}{arg\,max}
\DeclareMathOperator*{\argmin}{arg\,min}
\DeclareMathOperator{\Sim}{sim}
\DeclarePairedDelimiterX\set[1]\lbrace\rbrace{\,#1\,}
\theoremstyle{plain}
\newcommand*{\cOne}{Sampling from Current Layer}
\newcommand*{\cTwo}{Sampling from Previous Layer}
\newcommand*{\cTwoShort}{Sampling from Prev.\ Layer}
\newcommand*{\mbatch}{\textsubscript{M}}  
\newcommand*{\sgd}{\textsubscript{S}}  
\newcommand*{\lsh}{\textsc{ALSH-approx}}
\newcommand*{\aprox}{\textsc{MC-approx}}
\newcommand*{\maprox}{\textsc{MC-approx\mbatch}}
\newcommand*{\saprox}{\textsc{MC-approx\sgd}}
\newcommand*{\reg}{\textsc{Standard}}
\newcommand*{\mreg}{\textsc{Standard\mbatch}}
\newcommand*{\sreg}{\textsc{Standard\sgd}}
\newcommand*{\dropout}{\textsc{Dropout}}
\newcommand*{\mdropout}{\textsc{Dropout\mbatch}}
\newcommand*{\sdropout}{\textsc{Dropout\sgd}}
\newcommand*{\addropout}{\textsc{Adaptive-Dropout}}
\newcommand*{\maddropout}{\textsc{Adaptive-Dropout\mbatch}}
\newcommand*{\saddropout}{\textsc{Adaptive-Dropout\sgd}}
\newcommand*{\actv}{\uparrow}
\newcommand{\stitle}[1]{\vspace{0.5mm}\noindent{\textbf{#1}}.}
\newcommand{\tightsection}[1]{\section{#1}}
\newcommand{\tightsubsection}[1]{\subsection{#1}}
\begin{document}


\title{[Experiments \& Analysis] Evaluating the Feasibility of Sampling-Based Techniques for Training Multilayer Perceptrons}

\author{Sana Ebrahimi}
\affiliation{%
    \institution{University of Illinois Chicago}
    \department{Department of Computer Science}
    \city{Chicago}
    \state{Illinois}
    \country{USA}
    }
\email{sebrah7@uic.edu}

\author{Rishi Advani}
\orcid{0000-0002-5522-0401}
\affiliation{%
    \institution{University of Illinois Chicago}
    \department{Department of Computer Science}
    \city{Chicago}
    \state{Illinois}
    \country{USA}
    }
\email{radvani2@uic.edu}

\author{Abolfazl Asudeh}
\orcid{0000-0002-5251-6186}
\affiliation{%
    \institution{University of Illinois Chicago}
    \department{Department of Computer Science}
    \city{Chicago}
    \state{Illinois}
    \country{USA}
    }
\email{asudeh@uic.edu}

\begin{abstract}
The training process of neural networks is known to be time-consuming, and having a deep architecture only aggravates the issue. This process consists mostly of matrix operations, among which matrix multiplication is the bottleneck. Several sampling-based techniques have been proposed for speeding up the training time of deep neural networks by approximating the matrix products. These techniques fall under two categories: (i) sampling a subset of nodes in every hidden layer as active at every iteration and (ii) sampling a subset of nodes from the previous layer to approximate the current layer's activations using the edges from the sampled nodes. In both cases, the matrix products are computed using only the selected samples. In this paper, we evaluate the feasibility of these approaches on CPU machines with limited computational resources. Making a connection between the two research directions as special cases of approximating matrix multiplications in the context of neural networks, we provide a negative theoretical analysis that shows feedforward approximation is an obstacle against scalability. We conduct comprehensive experimental evaluations that demonstrate the most pressing challenges and limitations associated with the studied approaches. We observe that the hashing-based node selection method is not scalable to a large number of layers, confirming our theoretical analysis. Finally, we identify directions for future research.
\end{abstract}



\maketitle

\section{Introduction}\label{sec:intro}

The database community has played a significant role in tackling the complexities of big data, developing advanced technologies for addressing scalability challenges in very large data.
Many such technologies have been used to address big-data challenges across various domains. For example, \citet{asudeh2021scalable,asudeh2018leveraging} leverage sampling-based similarity joins~\citep{surajit} and selectivity estimation queries~\citep{haas,hash1} for signal reconstruction at scale.
Among such examples, data management for machine learning (ML)~\cite{kumar2017data,zhou2020database,li2021ai} shines as an interdisciplinary domain that led to many key advances, including SystemML~\citep{ghoting2011systemml,boehm2016systemml,boehm2014hybrid}, join optimization for feature selection~\citep{kumar2016join}, federated learning~\citep{mcmahan2017federated}, ML model materialization~\citep{zhang2016materialization,hasani2018efficient}, etc.
Following the theme of \emph{Extended Database Technologies for Machine Learning},
this paper studies the extension of sampling-based techniques for efficient training of deep neural networks (DNNs) on CPU machines with limited resources.

One of the crucial factors in developing DNN models with high performance is the network architecture. The task and dataset at hand determine the most appropriate architecture to use. In addition to a large number of layers, DNNs often have a high number of nodes per layer. While large models can better generalize, training them can be computationally expensive, requiring extensive amounts of data and powerful hardware, including expensive GPUs. On the other hand, \textcquote{AIOnCpu}{the ubiquity of CPUs provides a workaround to the GPU’s dominance}, motivating \textquote[Shrivastava as quoted by \citet{AIOnCpu}]{democratiz\textins*{ing} AI with CPUs}. Nevertheless, limited resources on personal computers with ordinary CPUs or mobile devices leads to difficulties in training DNNs to a sufficient level of accuracy. DNNs need to compute ``activation values'' for every layer in a forward pass and calculate gradients to update weights in the backpropagation. This requires performing expensive matrix multiplications that make the training process inefficient. Furthermore, large matrices often do not fit in the cache, and storing them in main memory necessitates constant communication between the processor and memory, which is even more time consuming.

In this work, we explore the scalability of two directions in sampling-based approaches based on locality-sensitive hashing (LSH)~\citep{LSHNN2,LSHNN1} and Monte-carlo (MC) estimations~\citep{montecarlo} for efficient training of DNNs, which can be applied on memory- and computa\-tion-constrained devices. LSH was introduced in VLDB’99~\cite{LSHNN2} for near-neighbor queries. Since then, there have been consistent contributions to this line of research, including recent publications~\citep{aumuller2022implementing,tian2023db,wei2023cryptographically}. Similarly, sampling-based Monte-carlo estimations and inner-product estimation (and search) have been two core techniques for approximate query processing~\citep{ahle2016complexity,park2018verdictdb} and Vector-DBs~\citep{pan2023survey}. We hope that evaluating sampling-based techniques for efficient training of DNNs in this paper inspires (DB) researchers to address the open problems identified.

Our contributions can be summarized as follows.
\begin{itemize}[leftmargin=*]
    \item We make a connection between two separate sampling-based research directions for training DNNs by showing that both techniques can be viewed as special cases of matrix approximation, where one samples rows of the weight matrix while the other sample its columns. To the best of our knowledge, there is no previous work in the literature to make this observation.
    \item After careful exploration of different techniques, we provide negative theoretical results that show estimation errors during the feedforward step propagate across layers. In particular, for the model of \cite{ScaleableLSH}, we prove that estimation error increases \emph{exponentially} with the number of hidden layers. 
    \item We provide extensive experiments using five training approaches and six benchmark datasets to evaluate the scalability of sampling-based approaches. Our experimental results confirm our theoretical analysis that feedforward approximation is an obstacle against scalability. In addition to other findings, our experiments reveal that while the model of \cite{FasterNN} is scalable for mini-batch gradient decent when the batch size is relatively large, there is a research gap when it comes to designing scalable sampling-based approaches for stochastic gradient decent.
\end{itemize}

The rest of our paper is organized as follows. 
We first discuss some of the potential benefits of training DNNs on CPU machines in \S\ref{sec:pros-cons}, followed by related work in \S\ref{sec:related}.

We define the problem formulation and provide a taxonomy of sampling-based approaches for efficient training in \S\ref{sec:prelim}. We discuss two of these approaches in further detail in \S\ref{sec:cone} and \S\ref{sec:ctwo}. We present our theoretical analysis in \S\ref{sec:analysis}. 

We discuss the extensions to convolutional neural networks in our technical report~\cite{techrep}.

Experiment details and takeaways are discussed in \S\ref{sec:exp_setup}--\ref{sec:discuss}, and we offer concluding remarks in \S\ref{sec:conclusion}. 
\section{Benefits of Training Neural Networks on CPUs}\label{sec:pros-cons}
The pursuit of advancing DNN training on CPUs unveils a compelling avenue replete with practical advantages.
Below, we briefly explain some of these potential benefits:

\stitle{Abundance of CPU Machines}
CPU-equipped personal computing devices, including PCs and smartphones, enjoy widespread availability and accessibility among a vast segment of the population. 
Remarkably, the computational potential of these devices often remains underutilized. 
Leveraging such resources for DNN training introduces the opportunity to conduct this computational-intensive task at no additional hardware cost for personal endeavors. 
Furthermore, while individual devices possess limited capacity, their collective potential can effectively address a multitude of moderate-sized artificial intelligence (AI) challenges. Recognizing this collective capability, recent endeavors have emerged to design client-side AI frameworks, exemplified by JavaScript packages like Tensorflow.js, facilitating machine learning on the client-side. Advancements in DNN training on CPU machines directly benefit these platforms.

\stitle{Independence from Backend Servers}
Personalized AI necessitates the training, or at the very least, fine-tuning of machine learning models with user-specific data. Opting for DNN training on CPU machines renders this process independent of backend GPU servers. Instead of transmitting data to the server, each personal device can locally fine-tune the models using its own data. This approach instantly confers several additional advantages:
\begin{description}
    \item[Privacy] By refraining from transmitting data to a server, concerns regarding data privacy are substantially alleviated.
    \item[Reduced Backend Computation] The computational burden is shifted to the client side, at no expense to the server.
    \item[Offline Availability] By localizing computations, the need for communication with a server is eliminated. This is especially useful for users with limited/unreliable internet access.
\end{description}

\stitle{Democratizing DNN Training}
GPU-equipped machines, while gradually becoming more affordable, still pose a considerable financial barrier. These costs manifest in the form of GPU access or enterprise APIs, especially in the context of large models like ChatGPT. Consequently, such resources remain inaccessible to a significant portion of the population. The facilitation of DNN training on CPU machines effectively dismantles this accessibility barrier.

Incorporating these considerations into the discourse of DNN training on CPU machines not only enriches the academic discussion but also underscores the profound implications of this research direction in addressing pressing real-world challenges and democratizing AI accessibility. It is evident that significant research efforts have been judiciously directed towards the principal trajectory of DNN training on GPU systems. Conversely, the avenue of training DNNs on CPU machines remains relatively under-explored within the research landscape. 

\section{Related Work}\label{sec:related}
The increasing importance of DNN applications opened the door to a variety of challenges associated with training these models. While there are numerous works on techniques for scaling DNNs, many of them have expensive hardware requirements and use GPUs to accelerate training \citep{GPUMM}. Unlike GPUs, CPUs are available on any device, so optimizing training performance on CPUs is beneficial. There have been studies in which distributed, concurrent, or parallel programming on CPUs has been used to accelerate training \citep{ScaleableLSH, DistLargeNN, OptTrainCPU, deep-pruning-limitedresources, mobile-DNN, MM-CPU}, but these methods are not always applicable due to variation in hardware requirements. Thus, algorithms focused on algorithmic optimization of feedforward and backpropagation are essential. Usually, methods with little to no special hardware requirements are preferred. Several algorithms apply a variety of sampling-based \citep{ScaleableLSH, ImpDropDNN, AdaptivDrop, FasterNN, statesparsity, unifiedCompression} or non-sampling-based approximations \citep{WTA, layerwiseImportance, non-sampling-opt, deep-pruning-limitedresources} to improve training for DNNs.

Several studies have shown that one way to scale up DNNs is to simplify the matrix-matrix or vector-matrix operations involved \citep{Dropout, AdaptivDrop, FastDrop, FasterNN, ScaleableLSH}. The complexity of matrix multiplication dominates the computational complexity of training a neural network. The multiplication of large matrices is known to be the main bottleneck in training DNNs. Often, we try to sparsify the matrices, which can minimize the communication between the memory and the processors \citep{sparse-mm-opt}. Pruning the network and limiting the calculations in both directions to a subset of nodes per layer is one solution to train DNNs efficiently. This is what dropout-type algorithms suggest \citep{Dropout, AdaptivDrop, statesparsity}. Dropout-type methods either use a data-dependent sampling distribution \citep{AdaptivDrop, FastDrop, FasterNN} or a predetermined sampling probability \citep{Dropout, ScaleableLSH}. Note that these techniques are able to provide a good approximation only if used in the context of neural networks; they are not necessarily applicable to general matrix multiplication.
\section{Preliminaries}\label{sec:prelim}
In this section, we describe the neural network model, feedforward step, and backpropagation in the form of matrix operations. Finally, we discuss state-of-the-art sampling-based algorithms.

\subsection{Problem Description}\label{sec:pre:defprob}
Many neural network architectures have been studied over the past decade. In this paper we focus on the standard multi-layer perceptron (MLP) model and analyze two major directions of sampling-based techniques for training neural networks.

Consider a feedforward neural network with $m_i$ inputs, $m_o$ outputs, and $\ell$ hidden layers. In general, every hidden layer $k$ contains $n_k$ hidden nodes while the nodes in the $(k-1)$-th and $k$-th layers are fully connected. Without loss of generality, for ease of explanation, we assume all hidden layers have exactly $n$ hidden nodes (Figure~\ref{fig:NN-Arch}).
For each layer $k$, we denote the vector of outputs by $a^k \in \Reals^{1 \times n}$. 
 Similarly, $W^k \in \Reals^{n \times n}$ and $b^k\in \Reals^{1 \times n}$ are the weights and the biases of layer
$k$\footnote{The first and last layer are exceptions: $W^1 \in \Reals^{m_i \times n}$ and $W^{\ell} \in \Reals^{n \times m_o}$.}, respectively.

\begin{figure}[bpt]
\small
\centering
\begin{tikzpicture}
\tikzset{node distance=8mm}
\tikzset{every node/.style={circle, draw, fill=green!15!white, inner sep=0.5mm, minimum size=6mm}}
\tikzset{blank/.style={draw=none,fill=none, inner sep=0, minimum size=3mm}}
\tikzset{every path/.style={thick, -}}


\node (n01) {$x_{1}$};
\node (n02) [below = of n01] {$x_2$};
\node[blank] (n0dots) [below = of n02, yshift=5mm] {{\huge\vdots}};
\node (n0mi) [below = of n0dots, yshift=4mm] {$x_{m_i}$};

\node[blank] [left = of n01, xshift=8mm] (i1) {};
\node[blank] [left = of n02, xshift=8mm] (i2) {};
\node[blank] [left = of n0mi, xshift=8mm] (imi) {};

\node (n11) [right = of n01,yshift=1.5mm] {$n_{1,1}$};
\node (n12) [below = of n11] {$n_{1,2}$};
\node[blank] (n1dots) [below = of n12, yshift=5mm] {\huge\vdots};
\node (n1n) [below = of n1dots, yshift=4mm] {$n_{1,n}$};

\node (n21) [right = of n11] {$n_{2,1}$};
\node (n22) [below = of n21] {$n_{2,2}$};
\node[blank] (n2dots) [below = of n22, yshift=5mm] {\huge\vdots};
\node (n2n) [below = of n2dots, yshift=4mm] {$n_{2,n}$};

\node (n31) [right = of n21, yshift=-8mm] {$n_{3,1}$};
\node[blank] (nldots) [below = of n31, yshift=5mm] {\huge\vdots};
\node (n3mo) [below = of nldots, yshift=4mm] { $n_{3,m_o}$};

\node[blank] (o1) [right = of n31, xshift=-8mm] {};
\node[blank] (omo) [right = of n3mo, xshift=-8mm] {};


\draw[->] (i1.center) -- (n01);
\draw[->] (i2.center) -- (n02);
\draw[->] (imi.center) -- (n0mi) node [blank, below=3mm] {Input};

\draw (n01) -- (n11) node [blank, midway, above] {$W^1$} node [blank, midway, below=-1mm] {\tiny $W^1_{1,1}$};
\draw (n01) -- (n12);
\draw (n01) -- (n1n);
\draw (n02) -- (n11);
\draw (n02) -- (n12);
\draw (n02) -- (n1n);
\draw (n0mi) -- (n11);
\draw (n0mi) -- (n12);
\draw (n0mi) -- (n1n) node [blank, midway, below=-2mm] {\tiny $W^1_{m_i,n}$} node [blank, below=1mm] {Layer 1};

\draw (n11) -- (n21) node [blank, midway, above] {$W^2$} node [blank, midway, below=-1mm] {\tiny $W^2_{1,1}$};
\draw (n11) -- (n22);
\draw (n11) -- (n2n);
\draw (n12) -- (n21);
\draw (n12) -- (n22);
\draw (n12) -- (n2n);
\draw (n1n) -- (n21);
\draw (n1n) -- (n22);
\draw (n1n) -- (n2n) node [blank, midway, below=-1mm] {\tiny $W^2_{n,n}$} node [blank, below=1mm] {Layer 2};

\draw (n21) -- (n31) node [blank, midway, above, xshift=1mm] {$W^3$} node [blank, midway, below=-1mm] {\tiny $W^3_{1,1}$};
\draw (n21) -- (n3mo);
\draw (n22) -- (n31);
\draw (n22) -- (n3mo);
\draw (n2n) -- (n31);
\draw (n2n) -- (n3mo) node [blank, midway, below=-2mm, xshift=2mm] {\tiny $W^3_{n,m_o}$} node [blank, below=5mm] {Output};

\draw[->] (n31) -- (o1.center);
\draw[->] (n3mo) -- (omo.center);
\end{tikzpicture}
\caption{Neural network with $\ell=3$ layers and $n$ nodes per hidden layer.}
\label{fig:NN-Arch}
\end{figure}
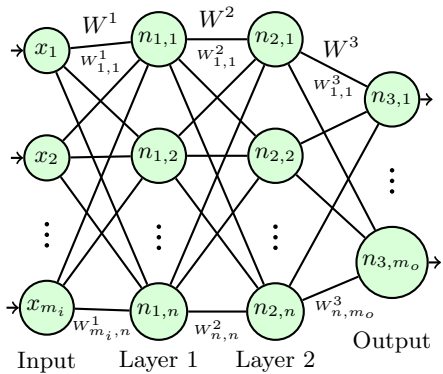

Let $f$ be the activation function (e.g., sigmoid or ReLU). With input vector $x \in \Reals^{1 \times m_i}$, the feedforward step is a chain of matrix products and activation functions that maps input vector $x$ to output vector $y$; it can be represented as follows (with $a^0 = x$):
\begin{align*}
    z^k = a^{k-1} W^k + b^k && a^k = f(z^k)
\end{align*}

In the setting described above, matrix-vector multiplication can be done in $\Theta(n^2)$ time and applying the element-wise activation function takes $\Theta(n)$ time for each layer. Thus, the entire feedforward process for the whole network is in the order of
$\Theta(\ell n^2)$.

The final aspect in the training of neural networks is backpropagation, an efficient method of computing gradients for the weights to move the network towards an optimal solution via stochastic gradient descent\footnotemark{} (SGD) \citep{DL}.
\footnotetext{While SGD uses only one data point to compute the gradients, an alternative approach is mini-batch gradient descent (MGD), where a small sample set (mini-batch) of the training set is used for estimations. Note that SGD can be viewed as a special case of MGD where the batch size is 1. Following our scope in this paper, SGD is considered for problem formulation, explaining the learning algorithm, and analysis. Nevertheless, as we shall later explain in \S\ref{sec:method:mm}, one of the evaluated approaches, \aprox{}, is based on MGD, the generalization of SGD. While SGD operations are in form of vector to matrix multiplication, MGD operations are in form of matrix (vectors of samples in the mini-batch) to matrix multiplication.}
The weight gradients for the backpropagation step can be computed recursively using Equation~\ref{eq:backprop}, where $\mathcal{L}$ is the loss function and $\odot$ is the Hadamard product.
\begin{equation}\label{eq:backprop}
\begin{aligned}
    &\delta^\ell = \nabla_{z^\ell} \mathcal{L} = f'(z^\ell) \odot \nabla_{a^\ell} \mathcal{L}
    &&\nabla_{W^k} \mathcal{L} = a^k \delta^{k+1} \\
    &\delta^k = \nabla_{z^k} \mathcal{L} = f'(z^k) \odot W \delta^{k+1} \qquad
    &&\nabla_{b^k} \mathcal{L} = \delta^{k+1}
\end{aligned}
\end{equation}
With gradient $a^k \delta_{k+1}$ and learning rate $\eta$, the weight matrix $W^{k}$ will be updated to $W^k - \eta a^k \delta_{k+1}$. The gradient computation and update operations are also in form of vector-matrix operations that take $\Theta(n^2)$ time for each layer. As a result, the backpropagation step in SGD also requires $\Theta(\ell n^2)$ time.

\begin{table}[bpt]
\small
\caption{Table of Notations}
\begin{tabularx}{\linewidth}{cX}
\toprule
{Notation} & {Description} \\
\midrule
$a^k$ & output vector of $k$-th layer \\
$z^k$ & input vector of the $k$-th layer \\
$W^k$ & weight matrix of the $k$-th layer \\
$W^k_{i,:}$ & row $i$ of $W^k$ \\
$W^k_{:,j}$ & column $j$ of $W^k$ \\
$\langle x_1, x_2 \rangle$ & inner product of $x_1$ and $x_2$ \\
\bottomrule
\end{tabularx}
\label{tab:notations}
\end{table}

\subsection{Taxonomy of Sampling-Based Techniques}\label{sec:methods}
The computation bottleneck in the training of a DNN is matrix multiplication, in form of a $(1\times n)$ to $(n \times n)$ vector-matrix product for SGD. Sampling-based approaches seek to speed up this operation by skipping a large portion of the scalar operations. SGD is a noisy algorithm by nature. As such, it is more tolerant of small amounts of noise \citep{precisionMM}, allowing for approximation. At a high level, these approaches fall in two categories, as shown below:

\begin{center}
\small
\begin{tikzpicture}[
level 1/.style={sibling distance=4cm, level distance=1.1cm}, level 2/.style={sibling distance=4cm, level distance=1.1cm}]

\node {Sampling-Based Techniques}
    child {node[align=center] {Sampling from\\ Current Layer}
        child {node[align=center] {Dropout \\ \cite{AdaptivDrop,Dropout}}}
        child {node[align=center] {\lsh{} \\ \cite{ScaleableLSH}}}
    }
    child {node[align=center] {Sampling from\\ Previous Layer}
        child {node[align=center] {\aprox{} \\ \cite{FasterNN,MonteCarloMM}}}
    };
\end{tikzpicture}
\end{center}

\stitle{\cOne{}}
The approaches in this category select a small subset of nodes in each layer during each feedforward--backpropagation step, and update the values of only those nodes. In Figure~\ref{fig:matrix}, each column $W^k_{:,j}$  corresponds to the node $n^k_{j}$ in layer $k$, while each cell $W^k_{i,j}$ in that column represents the weight of the edge from $n^{k-1}_{i}$ to $n^k_{j}$. As a result, these approaches can be viewed as \emph{selecting a small subset of the columns} of $W^k$ (e.g., the highlighted columns) and conducting the inner product only for those.

\stitle{\cTwo{}}
Instead of selecting a subset of columns and computing the exact inner-product for them, the alternative is to select all columns but compute the inner-product approximately for them by \emph{selecting a small subset of rows} of $W^k$ (e.g., highlighted rows in Figure~\ref{fig:matrix}). That is, instead of computing the sum for all $n$ scalars in an inner-product, to \emph{estimate} the sum by sampling a small number of scalars.

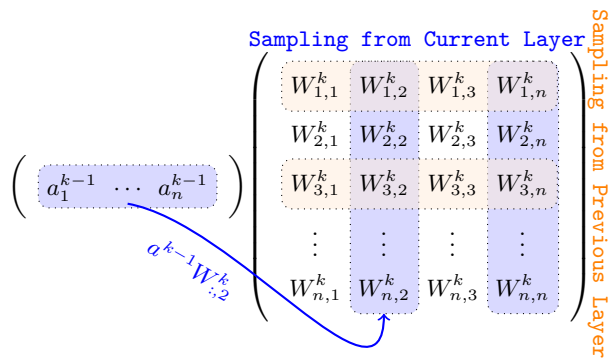
\begin{figure}[bpt]
\small
\centering
\begin{tikzpicture}
\pgfdeclarelayer{background}
\pgfdeclarelayer{foreground}
\pgfsetlayers{background,main,foreground}
\matrix (m) [matrix of math nodes,
ampersand replacement=\&,
left delimiter = (,
right delimiter = ),
nodes={minimum size=0.5em}]
{
a^{k-1}_1   \&   \cdots \& a^{k-1}_n\\
};
\matrix  [matrix of math nodes,
right =63pt of m-1-1.east,
ampersand replacement=\&,
left delimiter = (,
right delimiter = ),
nodes={minimum size=2em}] (j)
{
W^{k}_{1,1} \& W^{k}_{1,2} \& W^{k}_{1,3} \& W^{k}_{1,n} \\
W^{k}_{2,1} \& W^{k}_{2,2} \& W^{k}_{2,3} \& W^{k}_{2,n} \\
W^{k}_{3,1} \& W^{k}_{3,2} \& W^{k}_{3,3} \& W^{k}_{3,n} \\
\vdots \& \vdots \& \vdots \& \vdots\\
W^{k}_{n,1} \& W^{k}_{n,2} \& W^{k}_{n,3} \& W^{k}_{n,n} \\
};
\begin{pgfonlayer}{background}
\draw[rounded corners, dotted, fill=blue!15!white]
(m-1-1.north west) rectangle (m-1-3.south east);
\end{pgfonlayer}
\begin{pgfonlayer}{background}
\draw[rounded corners, dotted, fill=blue!15!white]
(j-1-2.north west) rectangle (j-5-2.south east);
\end{pgfonlayer}
\begin{pgfonlayer}{background}
\draw[rounded corners, dotted, fill=blue!15!white]
(j-1-4.north west) rectangle (j-5-4.south east);
\end{pgfonlayer}
\begin{pgfonlayer}{background}
\draw[rounded corners, dotted, fill=orange!15!white,fill opacity=0.5]
(j-1-1.north west) rectangle (j-1-4.south east);
\end{pgfonlayer}
\begin{pgfonlayer}{background}
\draw[rounded corners, dotted, fill=orange!15!white,fill opacity=0.5]
(j-3-1.north west) rectangle (j-3-4.south east);
\end{pgfonlayer}
\draw[->,thick,blue] (m-1-2.south) to [out=-10,in=270] (j-5-2.south);
\node[blue,below of=m-1-3,yshift=-1mm,xshift=1mm,rotate=-30] { $a^{k-1} W^k_{:,2}$};
\node[blue,above of=j-1-3,yshift=-4mm,xshift=0mm,rotate=0] {\tt  \hspace{-11mm} \cOne{}};
\node[orange,right of=j-3-4,yshift=0mm,xshift=0mm,rotate=-90] {\tt  \cTwoShort{}};
\end{tikzpicture}
\vspace{-8mm}
\caption{High-level idea of the sampling-based techniques.}
\label{fig:matrix}
\end{figure}

\section{Efficient Training by \cOne{}}\label{sec:cone}
\tightsubsection{Dropout}\label{sec:methods:dropout}
\citet{Dropout} introduced \dropout{}, a computationally efficient training approach that reduces the risk of overfitting. During each feedforward step, the algorithm picks a subset of nodes uniformly at random in each hidden layer and drops the remaining nodes temporarily. The sampled nodes are then used for feedforward evaluation and backpropagation.

While \dropout{} was originally introduced to fix overfitting, it introduced a computation reduction to the training process. In many cases, \dropout{} improved the runtime efficiency compared to the standard training process on the same architecture. However, there are scenarios in which training under \dropout{} requires more training iterations and eventually hurts the runtime. One can observe that due to the randomness in sampling with a fixed probability (usually $p = 1/2$), there is a risk of dropping nodes that significantly affect the output values. \citet{AdaptivDrop} addressed this issue by proposing \addropout{}, which uses a data-dependent distribution that is an approximation of the Bayesian posterior distribution over the model architecture and updates the sampling ratio adaptively w.r.t the current network. This method avoids randomly dropping significant nodes in the model.

\tightsubsection{Asymmetric Locality-Sensitive Hashing for Training Approximation}\label{sec:method:lsh}
Unlike in \dropout{}, one might want to intelligently select a small subset of so-called \emph{active nodes} for each layer for computing the inner products. In particular, given the vector $a^{k-1}$, the goal is to find a small portion of nodes $j$ in layer $k$ for which the value of $a^{k-1}W^k_{:,j}$ is maximized in order to avoid computing inner products for small values (estimating them as zero). Given a set $S$ of vectors (in this case, the set of columns in $W^k$) and a query vector $a$, the problem of finding a vector $w^* \in S$ with maximum inner product $\langle a, w^* \rangle$ is called maximum inner-product search (MIPS). To solve MIPS, \citet{MIPS} employ asymmetric locality-sensitive hashing (ALSH).
\begin{definition}[Asymmetric Locality-Sensitive Hashing \citep{MIPS}]\label{alsh}
Given a similarity threshold $S_0$ and similarity function $\Sim(\cdot)$, a family $\mathcal{H}$ of hash functions are $(S_0, cS_0, p_1, p_2)$-sensitive for $c$-NNS\footnote{$c$-approximation of nearest neighbor search \cite{LSHNN1}.} with $a\in \Reals^n $ as query and a set of $w \in \Reals^n$ vectors if for all $h \in \mathcal{H}$ chosen uniformly, the following conditions are satisfied:
\begin{gather*}
    \Sim(w, a) \geq \phantom{c}S_0 \quad \implies \Pr[h(Q(a)) = h(P(w))] \geq p_1 \\
    \Sim(w, a) \leq cS_0 \quad \implies \Pr[h(Q(a)) = h(P(w))] \leq p_2
\end{gather*}
\end{definition}
For $w$, $a \in \Reals^n$ with $\lVert w \rVert \leq C $, where $C$ is a constant less than 1, and $\lVert a \rVert = 1$, they define the transformations $P$ and $Q$ for the inner product as follows.
\begin{equation}\label{eq:PQ}
\begin{split}
    &P\colon \Reals^n \to \Reals^{n+m}, \quad w \mapsto \bigl[w; \lVert w \rVert ^{2^1}, \dots, \lVert w \rVert ^{2^m} \bigr] \\
    &Q\colon \Reals^n \to \Reals^{n+m}, \quad \mathrlap{a}\phantom{w} \mapsto \bigl[a; 1/2, \dots, 1/2 \bigr]
\end{split}
\end{equation}
In other words, to generate $P$, $w$ is padded with $m$ terms, where term $i$ is the $\ell_2$ norm of $w$ to the power of $2^i$. $Q$ is generated by padding $a$ with $m$ copies of the constant $1/2$. \citet{MIPS} prove that NNS in the transformed space is equivalent to the maximum inner product in the original space:
\begin{equation} \label{15}
    \argmax_w \langle w, a \rangle \approx \arg \min_w \lVert Q(a) - P(w)\rVert \,.
\end{equation}

Equation~\ref{15} motivates using MIPS for efficient training of DNNs. \citet{ScaleableLSH} build their algorithm (referred to as \textbf{\lsh{}} in this paper) upon Equation~\ref{15}. As explained in \S\ref{sec:prelim}, the feedforward step and backpropagation consist of many matrix multiplications, each of which involve a set of inner products as large as each hidden layer. \lsh{} uses ALSH to prune each layer by finding active nodes, in this case, nodes with maximum activation values. This is equivalent to solving MIPS in each layer.

Essentially, \lsh{} uses ALSH to find active nodes $j$ whose weight vector $W^k_{:,j}$ collides with an input vector $a^{k-1}$ under the same hash function. The probability of collision captures the similarity of vectors in each hidden layer. To do so, it sets the query vector as $q=a^{k-1}$ and the set of vectors using the columns of $W^k$ as $S = \set[\big]{W_{:,1}^k, \dots, W_{:,n}^k}$. Then, after constructing $Q$ and $P$ based on Equation~\ref{eq:PQ}, we have
\begin{equation}
\argmax_j \bigl\langle W_{:,j}^k, a^{k-1} \bigr\rangle \approx \argmin_j \bigl\lVert Q\bigl(a^{k-1}\bigr) - P\bigl(W_{:,j}^k\bigr) \bigr\rVert \,.
\end{equation}
\lsh{} constructs $L$ independent hash tables with $2^K$ hash buckets and assigns a $K$-bit randomized hash function to every table. Each layer has been assigned $L$ hash tables and a meta hash function to compute a hash signature for the weight vectors and fill all the hash tables before training. In this setting, $K$ and $L$ are tunable hyperparameters that affect the active set's size and quality.

During training, \lsh{} computes the hash signature of each incoming input using the existing hash functions. Then, a set of weight vectors will be returned using the hash values corresponding to the hash bucket. The active nodes in a layer are the union of their corresponding weight vectors from probing $L$ hash tables. Then, the model only performs the exact inner product on these active nodes and skips the rest. Finally, the gradient will only backpropagate through the active nodes and update the corresponding weights. In other words, ALSH is used to sample a subset of nodes with probability $1-(1-p^K)^L$ if $p$ is the probability of collision.
 
Updating the hash tables ensures that the modified weight vectors are recognized. Based on the results reported by \citet{ScaleableLSH}, the number of active nodes for each input can be as small as 5\% of the total nodes per layer. Thus, \lsh{} performs a significantly smaller set of inner products in each iteration. Moreover, due to the sparsity of the active sets belonging to different data inputs, the overlap between them throughout the dataset is small, so the weight gradient updates corresponding to these inputs are sparse as well. Thus, the hash table updates are executed after processing a batch of inputs and can be executed in parallel. The main advantage of \lsh{} is that, unlike \dropout{}, it finds the active nodes \emph{before} computing the inner products.
\tightsection{Efficient Training by \cTwo{}}\label{sec:ctwo}
While techniques discussed in \S\ref{sec:cone} reduce the vector-matrix multiplication time by selecting a subset of columns (nodes) from each weight matrix $W^k$ and computing the inner product exactly for them, an alternative approach is to select all columns but to compute inner products approximately. This idea has been proposed by \citet{FasterNN}. This paper is built on the MC method by \citet{MonteCarloMM} for fast approximation of matrix multiplication. We first review the work of \citet{MonteCarloMM} in \S\ref{sec:method:mm} and then in \S\ref{sec:method:fa} we explain how \citet{FasterNN} adapt the method to develop an algorithm for efficient training of DNNs.

\tightsubsection{Fast Approximation of Matrix Multiplication}\label{sec:method:mm}
For many applications, a fast estimation of the matrix product is good enough. In addition to hardware/software oriented optimizations such as cache management~\citep{GPUMM, anatomyMM} or half-precision computations~\citep{precisionMM, MM-CPU}, MC methods~\citep{montecarlo} have been used for such estimations. At a high level, MC methods use repeated sampling and the law of large numbers to estimate aggregate values.

Recall that given two matrices $A \in \Reals^{m \times n}$ and $B \in \Reals^{n \times p}$, the product $AB$ is an $m \times p$ matrix, where every element $AB_{i,j}$ is the inner product of $i$-th row of $A$ with the $j$-th column of $B$:
\begin{equation}\label{mm1}
    AB_{i,j} = \langle A_{i,:}^T, B_{:,j}\rangle= \sum^{n}_{t=1} A_{i,t} B_{t,j}
\end{equation}
In an MC estimation of $AB_{i,j}$, instead of computing the sum over all $t \in [1, n]$, only a small sample of elements $\sigma \subset \set{1, \dots, n}$, where $c = \lvert \sigma \rvert \ll n$, are considered.
Arguing that uniform sampling would add a high error in estimating $AB$, \citet{MonteCarloMM} introduce a nonuniform sampling method to generate $\sigma$ with a probability proportional to the magnitude of data. Specifically, they develop a randomized algorithm that samples each column $i$ of A and row $i$ of B with probability
\begin{equation}\label{eq:mm2}
p_i = \frac {\lVert A_{:,i} \rVert \cdot \lVert B_{i,:} \rVert}{\sum_{t=1}^n  \lVert A_{:,t} \rVert \cdot \lVert B_{t,:} \rVert} \,,
\end{equation}
where $\lVert \cdot \rVert$ is the $\ell_2$ norm.
They define the matrices $C \in \Reals^{m \times c}$ and $R \in  \Reals^{c \times p}$ as $C = ASD$ and $R = (SD)^T B$, respectively.
$S$ is then defined as an $n \times c$ sampling matrix, with $S_{ij} = 1$ if the $i$th row of $A$ is the $j$th sample.
$D$ is a $c \times c$ diagonal scaling matrix with $D_{jj} = \frac{1}{\sqrt{c p_j}}$. The authors prove that defining $p_i$ as in Equation~\ref{eq:mm2} minimizes the expected estimation error, $\mathbb{E}\bigl[\lVert AB - CR \rVert_F\bigr]$ , where $\lVert \cdot \rVert_F$ is the Frobenius norm.
Each element $AB_{i,j}$ is estimated as
$\sum^{c}_{t=1} C_{i,t} R_{t,j}  =   \sum^{c}_{t=1} \frac{1}{c p_i} A_{i,t} B_{t,j} \approx AB_{i,j}$. Sampling row-column pairs w.r.t $p_i$ reduces the complexity of matrix multiplication to $O(mcp)$.

\tightsubsection{\aprox}\label{sec:method:fa}
Training DNNs involves computationally expensive matrix multiplication operations. However, the gradients computed during backpropagation only approximate directions towards the optimal solution, so the training process has a high tolerance to small amounts of noise. This makes approximation of matrix multiplication a reasonable choice to speed up training of DNNs. Following this idea, \citet{FasterNN} propose a MC  approximation method for fast training of DNNs (in this paper, referred to as \textbf{\maprox{}} for the mini-batch setting and \textbf{\saprox{}} for the stochastic setting) based on the MC estimation of matrix multiplication explained in \S\ref{sec:method:mm}. Despite the fact that \citet{MonteCarloMM} provide an unbiased estimate for the matrix multiplication $AB$ (i.e., $E[CR] = AB$), \citet{FasterNN} prove that the sampling distribution is not able to provide an unbiased estimation of the weight gradient if it is used for both the forward step and backward pass simultaneously.

One way to eliminate the bias is to use MC approximation only in the forward pass, propagate the gradient through the entire network, and perform the exact computations. However, experiments show this approach results in lower accuracy in practice. Therefore, \citet{FasterNN} propose a new sampling distribution that yields an unbiased estimate of the weight gradient $\nabla_W \hat{\mathcal{L}}$ when it is used only during the feedforward step. Specifically, they sample column-row pairs independently from $A \in \Reals^{m \times n}$ and $B \in \Reals^{n \times p}$.

Let $k$ be the number of samples for estimation, let $V \in \Reals^{n \times n}$ be a diagonal sampling matrix with $V_{i,i} = Z_i \sim \operatorname{Bernoulli}(p_i)$, where $\sum^n_{i=0} p_i = k$, and let $D \in \Reals^{n \times n}$ be a diagonal scaling matrix with $D_{i, i} = \frac{1}{\sqrt{p_i}}$. Then, the multiplication of matrices $A$ and $B$ can be estimated as
$AB \approx \sum_{i=0}^n \frac{Z_i}{p_i} A_{:,i} B_{i,:} = AVDD^{T}V^{T}B = A'B'$.
The estimation error in this case is $E\bigl[\lVert AB-A'B'\rVert^2_F\bigr] = \sum_{i=0}^n \frac{1-p_i}{p_i} \lVert A_{i}\rVert^2 \lVert B_{i}\rVert^2$. 
Hence, the following assignment of probabilities minimizes the estimation error under the constraint $\sum_{i=0}^n p_i = k$.
\begin{equation}\label{p_i}
    p_i = \min \set[\bigg]{ \frac{k \lVert A_i\rVert \lVert B_i\rVert}{\sum^n_{t=0} \lVert A_t\rVert \lVert B_t\rVert },\, 1 }  
\end{equation}
The authors prove that training a neural network by approximating matrix products in backpropagation converges with the same rate as standard SGD and results in an unbiased estimator when nonlinearities are not considered. When accounting for nonlinearities, the results hold as long as the MC approximation of $Wx$ is unbiased and the activation and loss functions are $\beta$-Lipschitz.
\tightsection{Theoretical Analysis}\label{sec:analysis}
As explained in \S\ref{sec:method:lsh} and \S\ref{sec:method:mm}, sampling-based approaches seek to speed up the training of DNNs by skipping a large number of computations and approximating matrix multiplications. In this section we provide negative theoretical results for scalability against the feedforward approximation. At a high level, we show that small estimation errors in the initial layers get propagated and compounded in subsequent layers. \citet{FasterNN} already observed the low performance of \aprox{} when the feedforward step is approximated and therefore only applied approximation during backpropagation for MLPs. As such, we focus on \lsh{} for our analysis. First, let us introduce the following notation, which we will use throughout this section.
\begin{itemize}
    \item $\bar{a}^k_j$: the estimation of $a^k_j$ by \lsh{}.
    \item $e^k_j = a^k_j - \bar{a}^k_j$: the activation value estimation error.
    \item $\actv^k_j$: the set of active nodes for $n_j^k$.
\end{itemize}

\begin{lemma}\label{lem:er}
Let $f$ be a linear activation function such that $a = f(z)=z$. Assuming the active nodes are detected exactly, the estimation error for the node $n_j^k$ by \lsh{} is as follows.
\[
e^k_j = \begin{cases}
\sum\limits_{i\notin \actv^1_j} x_i W^1_{i,j} \quad &\text{if\/}\ k=1 \\
e^{k-1}W^k_{:,j}+\sum\limits_{i\notin \actv^k_j} \bar{a}^{k-1}_i W^1_{i,j} \quad &\text{otherwise}
\end{cases}
\]
\end{lemma}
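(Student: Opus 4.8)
The plan is to unwind the definition of the \lsh{} estimate layer by layer and track how the error $e^k_j = a^k_j - \bar a^k_j$ accumulates. Since $f$ is the identity, we have $a^k_j = z^k_j = a^{k-1} W^k_{:,j} = \sum_{i=1}^n a^{k-1}_i W^k_{i,j}$ exactly, and the analogous fact holds in the first layer with $a^0 = x$. The whole argument is really just a careful bookkeeping of which terms \lsh{} actually computes versus which it drops.

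First I would handle the base case $k=1$. The true activation is $a^1_j = \sum_{i=1}^{m_i} x_i W^1_{i,j}$. Since the inputs are exact (there is no previous hidden layer to approximate) and we assume the active set $\actv^1_j$ is detected exactly, \lsh{} computes $\bar a^1_j = \sum_{i \in \actv^1_j} x_i W^1_{i,j}$, estimating the contribution of every inactive node by zero. Subtracting gives $e^1_j = \sum_{i \notin \actv^1_j} x_i W^1_{i,j}$, which is the first case of the claim.

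Next I would treat a general layer $k \ge 2$. The key point is that \lsh{} feeds the \emph{estimated} activations $\bar a^{k-1}_i$ (not the true $a^{k-1}_i$) forward into layer $k$, and then, having identified $\actv^k_j$ exactly, it sums only over the active nodes: $\bar a^k_j = \sum_{i \in \actv^k_j} \bar a^{k-1}_i W^k_{i,j}$. The true value is $a^k_j = \sum_{i=1}^n a^{k-1}_i W^k_{i,j}$. I would now write $a^{k-1}_i = \bar a^{k-1}_i + e^{k-1}_i$ and split the sum into $i \in \actv^k_j$ and $i \notin \actv^k_j$. On the active part, the $\bar a^{k-1}_i W^k_{i,j}$ terms cancel against $\bar a^k_j$, leaving $\sum_{i \in \actv^k_j} e^{k-1}_i W^k_{i,j}$; on the inactive part nothing cancels, leaving $\sum_{i \notin \actv^k_j} (\bar a^{k-1}_i + e^{k-1}_i) W^k_{i,j} = \sum_{i\notin\actv^k_j} a^{k-1}_i W^k_{i,j}$. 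Recombining the two error-carrying pieces over active and inactive $i$ gives $\sum_{i=1}^n e^{k-1}_i W^k_{i,j} + \sum_{i\notin\actv^k_j} \bar a^{k-1}_i W^k_{i,j}$, i.e. $e^{k-1} W^k_{:,j} + \sum_{i\notin\actv^k_j}\bar a^{k-1}_i W^k_{i,j}$, matching the second case (the paper writes $W^1$ where it presumably means $W^k$). I would present this as an induction on $k$, though the recursion is genuinely one-step so no strong hypothesis is needed beyond the meaning of $e^{k-1}$.

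The only subtle point — and the place I would be most careful — is pinning down exactly what ``\lsh{} computes'' means, since the lemma's correctness hinges entirely on that: namely, that it propagates $\bar a^{k-1}$ rather than $a^{k-1}$, that it zeroes out inactive coordinates rather than leaving them at some stale value, and that ``active nodes detected exactly'' lets us treat $\actv^k_j$ as a fixed deterministic set so the sums split cleanly. Once that operational description is fixed, the rest is a one-line rearrangement; there is no inequality, no probabilistic estimate, and no hard analysis in this lemma — it is purely the setup for the exponential blow-up bound that presumably follows.
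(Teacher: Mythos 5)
Your proof is correct and follows essentially the same route as the paper's: decompose $a^k_j$ by writing $a^{k-1}_i = \bar a^{k-1}_i + e^{k-1}_i$, split the sum over active and inactive indices, and identify $\bar a^k_j$ with the active-index part of the $\bar a^{k-1}$ terms. Your observation that the statement's $W^1_{i,j}$ in the second case should read $W^k_{i,j}$ is also right (the paper's own proof has the analogous slip, writing $W^{k-1}$ for $W^k$).
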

\begin{proof}
We want to show that the estimation error for the node $n_j^k$ by \lsh{} is as follows.
\[
e^k_j = \begin{cases}
\sum_{i\notin \actv^1_j} x_i W^1_{i,j} \quad &\text{if\/}\ k=1 \\
e^{k-1}W^k_{:,j}+\sum_{i\notin \actv^k_j} \bar{a}^{k-1}_i W^1_{i,j} \quad &\text{otherwise}
\end{cases}
\]

First, for $k=1$:
\begin{align*}
    &a^1_j = \sum_{i=1}^nx_iW^1_{i,j} = \sum_{l\in\actv^k_j}x_lW^1_{l,j} + \sum_{i\notin\actv^k_j}x_iW^1_{i,j}= \bar{a}^1_j + \sum_{i\notin\actv^k_j}x_iW^1_{i,j}\\
    \implies &e^1_j=a^1_j- \bar{a}^1_j=\sum_{i\notin\actv^k_j}x_iW^1_{i,j}
\end{align*}
Analogously, when $k>1$: 
\begin{align*}
    a^k_j &= \sum_{i=1}^n a^{k-1}_iW^{k-1}_{i,j} = \sum_{i=1}^n (\bar{a}^{k-1}_i+e^{k-1}_i)W^{k-1}_{i,j}\\
    &= \sum_{l\in\actv^k_j} \bar{a}^{k-1}_lW^{k-1}_{l,j} + \sum_{i\notin\actv^k_j} \bar{a}^{k-1}_iW^{k-1}_{i,j} + \sum_{i=1}^n e^{k-1}_iW^{k-1}_{i,j}\\
    &= \bar{a}^k_j + e^{k-1}W^k_{:,j}+\sum_{i\notin \actv^k_j} \bar{a}^{k-1}_iW^{k-1}_{i,j}\\
    \implies e^1_j&=a^1_j- \bar{a}^1_j= e^{k-1}W^k_{:,j}+\sum_{i\notin \actv^k_j} \bar{a}^{k-1}_iW^{k-1}_{i,j}
\end{align*}
\end{proof}

Lemma~\ref{lem:er} provides a recursive formula for the activation value estimation error in terms of the weighted summation over active nodes versus inactive nodes. To provide a non-recursive and easier to understand formula, in Theorem~\ref{th:er} we assume that the weighted summation over the active nodes is always $c$ times that of the inactive nodes.
\begin{theorem}\label{th:er}
Let $f$ be a linear activation function such that $a = f(z)=z$. Suppose for any node $n_p^l$,
\[\sum_{i\in\actv^l_p}a^{l-1}_iW_{i,p}=c\sum_{i\notin\actv^l_p}a^{l-1}_iW_{i,p} \,.\]
Then, $a^k_j = \bar{a}^k_j\Bigl(\frac{c+1}{c}\Bigr)^k$. That is, $e^k_j=\bar{a}^k_j\Bigl(\bigl(\frac{c+1}{c}\bigr)^k-1\Bigr)$.
\end{theorem}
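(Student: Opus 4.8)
The plan is to establish the identity $a^k_j = \bar a^k_j\Bigl(\frac{c+1}{c}\Bigr)^k$ by induction on the layer index $k$; the stated formula for $e^k_j = a^k_j - \bar a^k_j$ is then immediate. Since $f$ is the identity (and, consistent with the rest of this section's linear-activation analysis, the biases are taken to be zero), the true activation is the plain linear combination $a^k_j = \sum_i a^{k-1}_i W^k_{i,j}$, whereas \lsh{} forms its estimate by restricting the inner product to the active set while feeding forward the \emph{estimated} previous-layer activations, i.e.\ $\bar a^k_j = \sum_{i\in\actv^k_j} \bar a^{k-1}_i W^k_{i,j}$, with $\bar a^0 = a^0 = x$. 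The hypothesis, applied at layer $l$ and node $p$, will be used both as $\sum_{i\in\actv^l_p} a^{l-1}_i W^l_{i,p} = c\sum_{i\notin\actv^l_p} a^{l-1}_i W^l_{i,p}$ and, by adding the inactive part to both sides, as $\sum_i a^{l-1}_i W^l_{i,p} = (c+1)\sum_{i\notin\actv^l_p} a^{l-1}_i W^l_{i,p}$.

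For the base case $k=1$, I split $a^1_j = \sum_{i\in\actv^1_j} x_i W^1_{i,j} + \sum_{i\notin\actv^1_j} x_i W^1_{i,j}$; the first sum is exactly $\bar a^1_j$ because $\bar a^0 = x$. The hypothesis at $l=1$, $p=j$ then gives $\bar a^1_j = c\sum_{i\notin\actv^1_j}x_i W^1_{i,j} = c\,(a^1_j-\bar a^1_j)$, hence $a^1_j = \bar a^1_j\,\frac{c+1}{c}$, which is the claim for $k=1$ and also matches the $k=1$ branch of Lemma~\ref{lem:er}.

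For the inductive step, assume $a^{k-1}_i = \bar a^{k-1}_i\Bigl(\frac{c+1}{c}\Bigr)^{k-1}$ for every node $i$ of layer $k-1$. Writing $S = \sum_{i\notin\actv^k_j} a^{k-1}_i W^k_{i,j}$, the second form of the hypothesis at layer $k$ gives $a^k_j = (c+1)\,S$. For the estimate, I substitute the induction hypothesis into $\bar a^k_j = \sum_{i\in\actv^k_j}\bar a^{k-1}_i W^k_{i,j}$ to get $\bar a^k_j = \Bigl(\frac{c}{c+1}\Bigr)^{k-1}\sum_{i\in\actv^k_j} a^{k-1}_i W^k_{i,j}$, and then apply the first form of the hypothesis to rewrite the active-set sum as $c\,S$, so $\bar a^k_j = \Bigl(\frac{c}{c+1}\Bigr)^{k-1} c\,S$. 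Provided $S\neq 0$, dividing cancels $S$ and yields $\frac{a^k_j}{\bar a^k_j} = \frac{c+1}{c}\Bigl(\frac{c+1}{c}\Bigr)^{k-1} = \Bigl(\frac{c+1}{c}\Bigr)^k$, which closes the induction; consequently $e^k_j = a^k_j - \bar a^k_j = \bar a^k_j\Bigl(\bigl(\frac{c+1}{c}\bigr)^k - 1\Bigr)$. One could equivalently run the same computation through the recursion of Lemma~\ref{lem:er}, but carrying $a^k_j$ and $\bar a^k_j$ separately is cleaner.

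There is no deep obstacle here; the points requiring care are bookkeeping. The main subtlety is that the hypothesis is stated for the \emph{exact} activations $a^{l-1}$, while \lsh{} propagates the \emph{estimated} activations $\bar a^{l-1}$, so the induction hypothesis is exactly what is needed to convert $\bar a^{k-1}$ to $a^{k-1}$ inside $\bar a^k_j$; getting this direction right is where the argument can go wrong. Second, the degenerate case $S=0$ is harmless: then the hypothesis forces the active-set sum, $a^k_j$, and $\bar a^k_j$ to all vanish and the identity reads $0 = 0\cdot\bigl(\frac{c+1}{c}\bigr)^k$, so restricting to $S\neq 0$ when dividing loses nothing. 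Finally, I would note explicitly that the clean geometric factor arises only because the single constant $c$ is assumed to govern the active-versus-inactive split uniformly at \emph{every} node and \emph{every} layer, which is what makes the per-layer factor $\frac{c+1}{c}$ telescope into $\bigl(\frac{c+1}{c}\bigr)^k$.
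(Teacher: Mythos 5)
Your proof is correct and follows essentially the same route as the paper's: induction on $k$ using the active/inactive split of the inner product, with the base case handled identically. Your inductive step is organized slightly more cleanly --- you express both $a^k_j$ and $\bar a^k_j$ in terms of the common inactive-set sum $S$ and take the ratio, whereas the paper isolates the error term $\sum_i e^{k-1}_i W^{k-1}_{i,j}$ and solves a linear equation for $a^k_j$; your version also makes explicit a point the paper leaves implicit, namely that the ratio hypothesis (stated for the exact activations) transfers to the estimated ones only because the induction hypothesis supplies a node-independent proportionality factor $\bigl(\frac{c}{c+1}\bigr)^{k-1}$.
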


\begin{proof}
For any node $n_p^l$, we have
\[\sum_{i\in\actv^l_p}a^{l-1}_iW_{i,p}=c\sum_{i\notin\actv^l_p}a^{l-1}_iW_{i,p} \,.\]
We then use induction to prove $a^k_j = \bar{a}^k_j\Bigl(\frac{c+1}{c}\Bigr)^k$.

\emph{Base case.} When $k=1$:
\begin{equation*}
    a^1_j
    = \sum_{i=1}^nx_iW^1_{i,j}
    = \sum_{l\in\actv^k_j}x_lW^1_{l,j} + \sum_{i\notin\actv^k_j}x_iW^1_{i,j}
    = \bar{a}^1_j + \frac{1}{c}\bar{a}^1_j
    = \bar{a}^1_j\frac{c+1}{c}
\end{equation*}

\emph{Inductive step.} Assuming $a^{k-1}_j = \bar{a}^{k-1}_j\Bigl(\frac{c+1}{c}\Bigr)^{k-1}$:
\begin{align}\label{eq:tmp1}
 a^k_j &= \sum_{i=1}^n a^{k-1}_iW^{k-1}_{i,j} = \sum_{i=1}^n (\bar{a}^{k-1}_i+e^{k-1}_i)W^{k-1}_{i,j}\\
\nonumber &= \sum_{l\in\actv^k_j} \bar{a}^{k-1}_lW^{k-1}_{l,j} + \sum_{i\notin\actv^k_j} \bar{a}^{k-1}_iW^{k-1}_{i,j} + \sum_{i=1}^n e^{k-1}_iW^{k-1}_{i,j}\\
\nonumber &= \bar{a}^k_j + \frac{1}{c} \bar{a}^k_j + \sum_{i=1}^n e^{k-1}_iW^{k-1}_{i,j}
     = \frac{c+1}{c}\bar{a}^k_j + \sum_{i=1}^n e^{k-1}_iW^{k-1}_{i,j}
\end{align}

Let $A=\sum_{i=1}^n e^{k-1}_iW^{k-1}_{i,j}$.
We have
\begin{align*}
    e^{k-1}_i &= a^{k-1}_i - \bar{a}^{k-1}_i = a^{k-1}_i - a^{k-1}_i \Bigl(\frac{c}{c+1}\Bigr)^{k-1} \\
    &= a^{k-1}_i\biggl(1 - \Bigl(\frac{c}{c+1}\Bigr)^{k-1}\biggr) \,.
\end{align*}
Thus,
\begin{align*}
    A&=\sum_{i=1}^n e^{k-1}_iW^{k-1}_{i,j} = \sum_{i=1}^n a^{k-1}_i\biggl(1 - \Bigl(\frac{c}{c+1}\Bigr)^{k-1}\biggr)W^{k-1}_{i,j} \\
    &=  \biggl(1 - \Bigl(\frac{c}{c+1}\Bigr)^{k-1}\biggr)\sum_{i=1}^n a^{k-1}_iW^{k-1}_{i,j}
    =  \biggl(1 - \Bigl(\frac{c}{c+1}\Bigr)^{k-1}\biggr)a^k_j \,.
\end{align*}
Now, plugging $A$ back into Equation~\ref{eq:tmp1}, we get:
\begin{align*}
    &a^k_j = \frac{c+1}{c}\bar{a}^k_j + A
    = \frac{c+1}{c}\bar{a}^k_j + \biggl(1 - \Bigl(\frac{c}{c+1}\Bigr)^{k-1}\biggr)a^k_j \\
    \implies &a^k_j \biggl(1-\biggl(1 - \Bigl(\frac{c}{c+1}\Bigr)^{k-1}\biggr)\biggr) = \frac{c+1}{c}\bar{a}^k_j \implies a^k_j = \bar{a}^k_j\Bigl( \frac{c+1}{c} \Bigr)^k
\end{align*}
\end{proof}

Theorem~\ref{th:er} proves that the estimation error increases \textbf{exponentially} with the number of layers. As a result, due to the sharp increase in the estimation error, \lsh{} does not scale to DNNs. To better observe this, suppose $c=5$ (i.e., the weighted sum for the active nodes is five times that of the inactive nodes). Then, using Theorem~\ref{th:er}, the error-to-estimate ratios for different numbers of layers are as follows.
\begin{center}
\begin{tabular}{|c|c|c|c|c|c|c|}
\hline
$\mathbf{k}$ & 1 & 2 & 3 & 4 & 5 & 6 \\
\hline
$\mathbf{{e^k_j}/{\bar{a}^k_j}}$ & 0.2 & 0.44 & 0.72 & 1.07 & 1.48 & 1.98 \\
\hline
\end{tabular} 
\end{center}
From the above table, it is evident that as soon as the depth of the network gets larger than 3, the estimation error dominates the estimation value. This is consistent with our experiment results, where \lsh{} failed to scale with more than 3 hidden layers.
\tightsection{Experiment Setup}\label{sec:exp_setup}
\tightsubsection{Hardware}
This paper aims to evaluate sampling-based approaches for training DNNs on regularly available machines; thus, we ran all experiments on a single-CPU machine (Intel Core i9-9920X machine with 128 GB of memory) without a GPU.

\tightsubsection{Datasets}\label{sec:exp_setup:datasets}
We used the following six benchmark datasets for our experiments.
\begin{description}
\item[MNIST \citep{mnist}] 70,000 handwritten digits, each in the form of a $28 \times 28$ grayscale image, and 10 classes (digits zero to nine).
\item[Kuzushiji-MNIST \citep{kmnist}] 70,000 cursive Japanese characters, each in the form of a $28 \times 28$ grayscale image, and 10 classes.
\item[Fashion-MNIST \citep{fashionmnist}] 70,000 fashion products, each in the form of a $28 \times 28$ grayscale image, and 10 classes.
\item[EMNIST-Letters \citep{emnist}] 145,600 handwritten letters, each in the form of a $28 \times 28$ grayscale image, and 26 classes.
\item[NORB \citep{Norb}] 48,600 photographs of 50 toys from different angles, each in the form of a $96 \times 96$ grayscale image, and 5 classes.  
\item[CIFAR-10 \citep{CIFAR10}]  60,000 color images, each of dimensions $32 \times 32$, and 10 classes.  
\end{description}
We randomly partition the datasets as shown below:

\begin{center}
\small
\begin{tabular}{cccc}
    \toprule
    Dataset & Train & Test & Validation \\
    \midrule
    (Kuzushiji/Fashion-)MNIST & 55000 & 10000 & 5000 \\
    EMNIST-Letters & 104800 & 20800 & 20000 \\
    NORB & 22300 & 24300 & 2000 \\
    CIFAR-10 & 45000 & 10000 & 5000 \\
    \bottomrule
\end{tabular}
\end{center}

We obtained similar results across different datasets. For brevity, we provide a detailed discussion only using the results on the MNIST dataset, but extensive results for other datasets are provided in our technical report~\citep{techrep}.

\tightsubsection{Methods Evaluated}
We evaluated four sampling-based approaches for training DNNs discussed in \S\ref{sec:cone} and \S\ref{sec:ctwo}, namely \dropout{} \citep{Dropout}, \addropout{} \citep{AdaptivDrop}, \aprox{} \citep{FasterNN}, and \lsh{} \citep{ScaleableLSH}, on fully connected DNNs. In addition, the regular training approach, referred to as \reg{}, has been implemented for comparison purposes. All implementations are in Python 3.9 using the PyTorch library. For \aprox{} \footnote{\url{github.com/acsl-technion/approx}}, \lsh{} \footnote{\url{github.com/rdspring1/LSH-Mutual-Information}}\footnote{\url{github.com/rdspring1/LSH_DeepLearning}}, \dropout{} \footnote{\label{footnote:dropout}\url{github.com/gngdb/adaptive-standout}}, and \addropout{}\footnote{see footnote~\ref{footnote:dropout}.} we used the publicly available code.

\tightsubsection{Default Values}\label{sec:exp_setup:defaults}
To train our models, we use SGD. The activation function used for hidden layers is ReLU due to its simplicity, ease of computation, and the fact that it helps with the vanishing gradients problem \citep{DL}. The output layer activation function is log softmax, and the loss function used throughout experiments is the negative log-likelihood. The learning rate is always either $10^{-4}$ or $10^{-3}$ depending on the setting, and the models are trained for 50 epochs. In particular, we set the learning rate to $10^{-4}$ for \saprox{}\footnote{Across all tables and plots we use the subscripts ``S'' and ``M'' to refer to the SGD and mini-batch SGD (with default batch size 20) settings, respectively. When there is no subscript, the default is MGD for \aprox{} and SGD for all other methods.}. The hyperparameters of our implementation are the best values reported for each approach. For \aprox{} the batch size is set to 20 and $k = 10$. For \lsh{}, $K = 6$, $L = 5$, and $m = 3$ (Equation~\ref{eq:PQ}) as specified in \citep{ScaleableLSH}. In order to have a fair comparison with \lsh{}, we set the probability of picking nodes for \dropout{} and \addropout{} to $p = 0.05$, and we use a network with 3 hidden layers and 1000 hidden units per layer across algorithms. The implementation of \lsh{} provided in \citep{MIhashing} performs better with the Adam optimizer \citep{Adam} than with Adagrad \citep{JMLR:v12:duchi11a}, which is used in the original implementation in \citep{ScaleableLSH}. Hence, we use Adam in our experiments.

For the convolutional setting, we used ResNet-18 with two fully-connected layers as a classifier to run our experiments. We limit the approximation to the classifier and keep the convoluted operations exact. Also, for CIFAR-10, we use pure SGD instead of Adam.

\tightsubsection{Experiment Plan}\label{sec:exp_setup:plan}
We are mainly interested in evaluating the following.
\begin{description}
\item[Accuracy]
How do the algorithms perform when training networks with different depths?
\item[Time]
How scalable are the evaluated algorithms (in particular, \lsh{} and \aprox{}) w.r.t training time?
\item[Hyperparameters]
How do hyper-parameter choices (e.g., batch size) affect training time and accuracy?
\end{description}
Accuracy here refers to the percentage of correct predictions on the entire dataset. Since we focus on multi-class classification, we also provide confusion matrices.
\section{Experiment Results}\label{sec:results}
\subsection{Accuracy}
We begin our experiments by addressing the first question in \S\ref{sec:exp_setup:plan}. To do so, we generate models with different numbers of hidden layers (1 to 7) and evaluate each method on all six datasets discussed in \S\ref{sec:exp_setup:datasets} for both stochastic\footnote{ when the batch size is equal to 1.} and mini-batch settings. The confusion matrices for all algorithms are provided in Figure~\ref{fig:matrices}. Every row in the figure shows the performance of an algorithm, while different columns represent networks with different numbers of hidden layers. In all plots contained within the figure, the x-axis shows the model prediction and the y-axis shows the true labels. Consequently, the diagonal cells show correct predictions, while all other cells are incorrect predictions. Ideally, the models should have (near-)zero values on non-diagonal cells.

\begin{figure*}[p]
\centering
\begin{tabular}{@{}c|@{}c@{}|@{}c@{}|@{}c@{}|@{}c@{}}
& \textbf{1 hidden layer} & \textbf{3 hidden layers} & \textbf{5 hidden layers} & \textbf{7 hidden layers} \\
\hline
%
%
%
\rotatebox{90}{\hspace{8.5mm} \bfseries \reg}  
&
\subcaptionbox{\label{fig:reg1}}
{\includegraphics[width=0.225\linewidth]{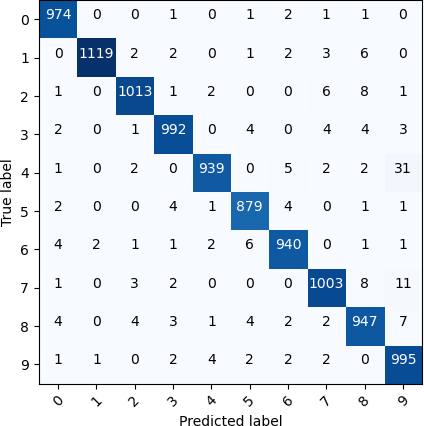}}
& 
\subcaptionbox{\label{fig:reg3}}
{\includegraphics[width=0.225\linewidth]{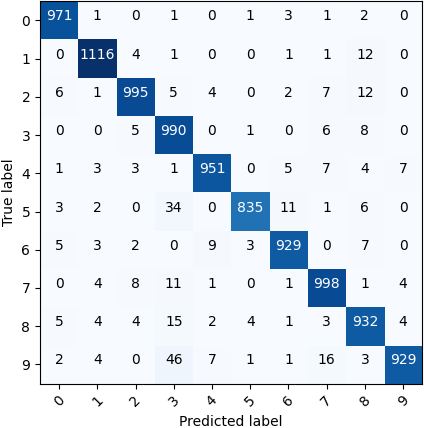}}
& 
\subcaptionbox{\label{fig:reg5}}
{\includegraphics[width=0.225\linewidth]{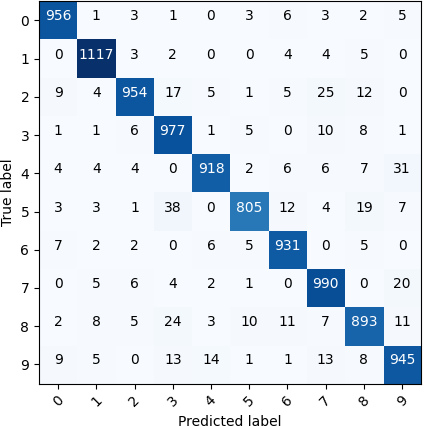}}
& 
\subcaptionbox{\label{fig:reg7}}
{\includegraphics[width=0.225\linewidth]{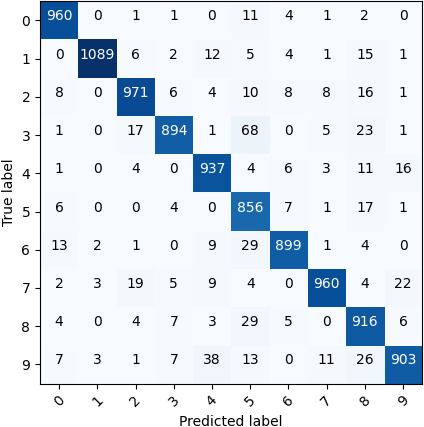}} \\
\hline
%
%
%
\rotatebox{90}{\hspace{9mm} \bfseries \dropout}
&
\subcaptionbox{\label{fig:drop-l1}}
{\includegraphics[width=0.225\linewidth]{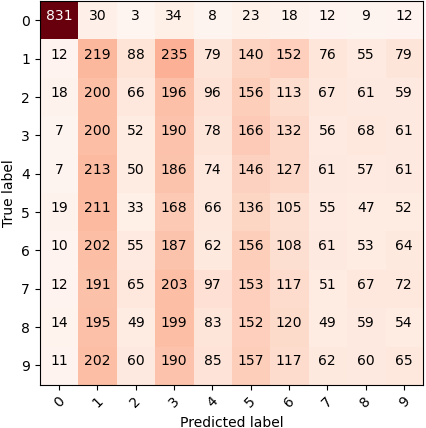}}
&
\subcaptionbox{\label{fig:drop-l3}}
{\includegraphics[width=0.225\linewidth]{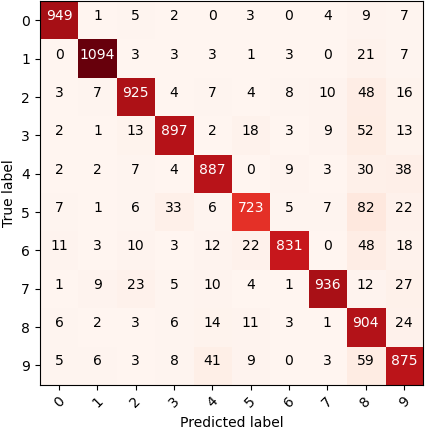}}
& 
\subcaptionbox{\label{fig:drop-l5}}
{\includegraphics[width=0.225\linewidth]{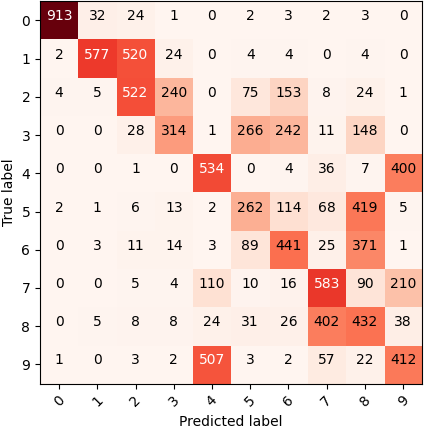}}
& 
\subcaptionbox{\label{fig:drop-l7}}
{\includegraphics[width=0.225\linewidth]{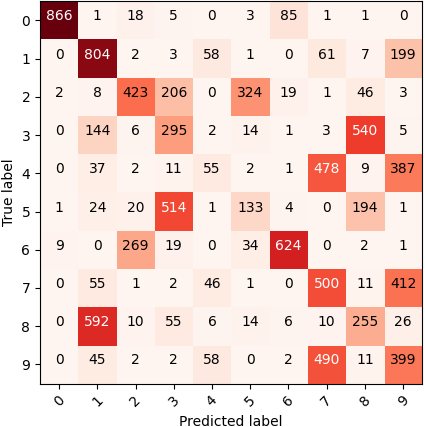}} \\
\hline
%
%
%
\rotatebox{90}{\hspace{0mm} \bfseries \addropout}
&
\subcaptionbox{\label{fig:addrop-l1}}
{\includegraphics[width=0.225\linewidth]{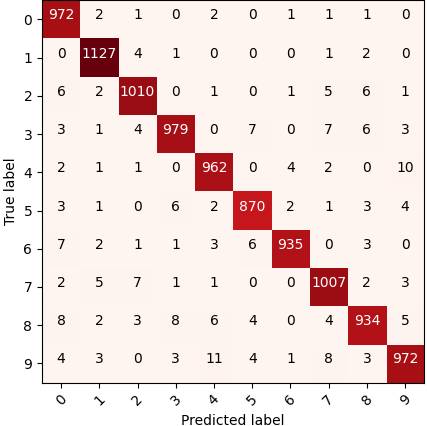}}
& 
\subcaptionbox{\label{fig:addrop-l3}}
{\includegraphics[width=0.225\linewidth]{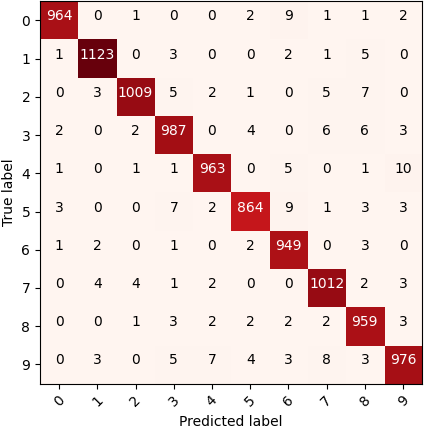}}
& 
\subcaptionbox{\label{fig:addrop-l5}}
{\includegraphics[width=0.225\linewidth]{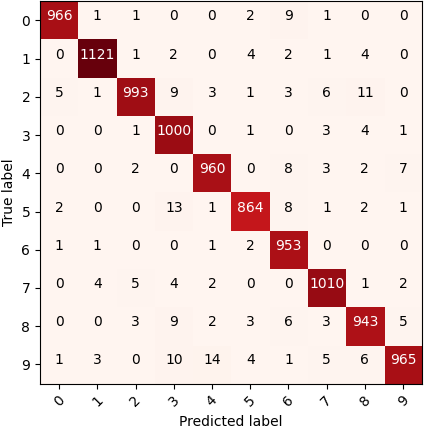}}
& 
\subcaptionbox{\label{fig:addrop-l7}}
{\includegraphics[width=0.225\linewidth]{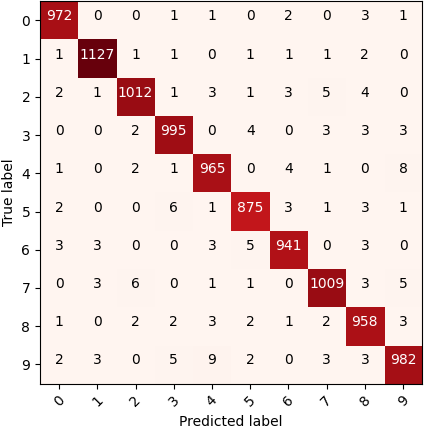}}
\label{fig:addrop-l8} \\
\hline
%
%
%
\rotatebox{90}{\hspace{4mm} \bfseries \lsh}
&
\subcaptionbox{\label{fig:lsh-l1}}
{\includegraphics[width=0.225\linewidth]{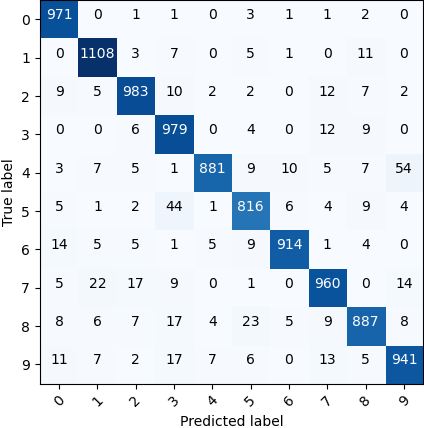}}
& 
\subcaptionbox{\label{fig:lsh-l3}}{\includegraphics[width=0.225\linewidth]{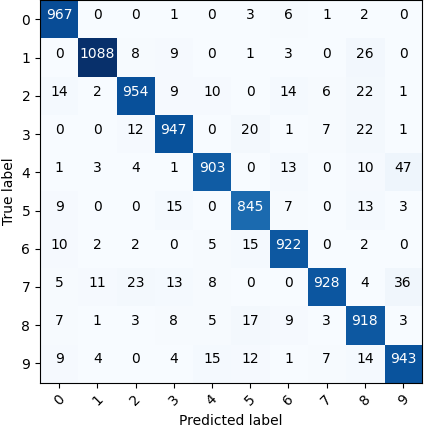}}
& 
\subcaptionbox{\label{fig:lsh-l5}}
{\includegraphics[width=0.225\linewidth]{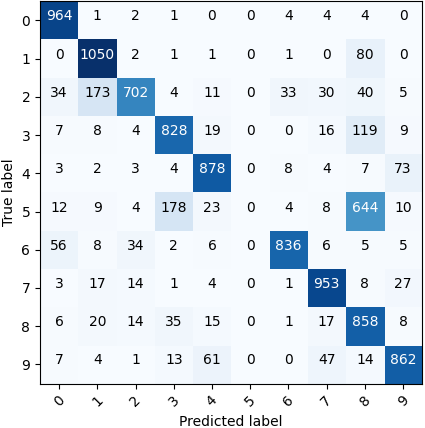}}
& 
\subcaptionbox{\label{fig:lsh-l7}}
{\includegraphics[width=0.225\linewidth]{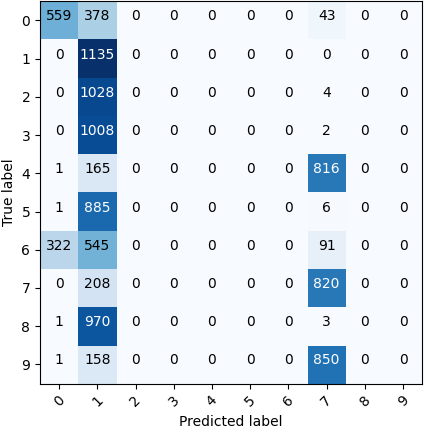}} \\
\hline
%
%
%
\rotatebox{90}{\hspace{7mm} \bfseries \maprox}
&
\subcaptionbox{\label{fig:approx-l1}}
{\includegraphics[width=0.225\linewidth]{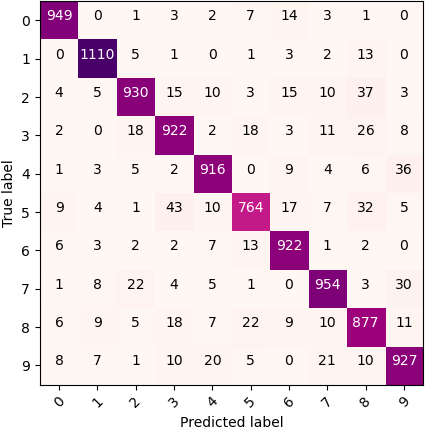}}
& 
\subcaptionbox{\label{fig:approx-l3}}
{\includegraphics[width=0.225\linewidth]{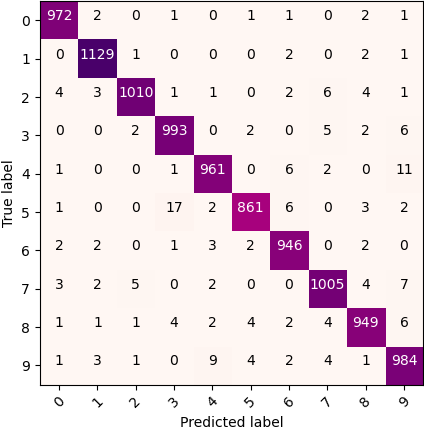}}
& 
\subcaptionbox{\label{fig:approx-l5}}
{\includegraphics[width=0.225\linewidth]{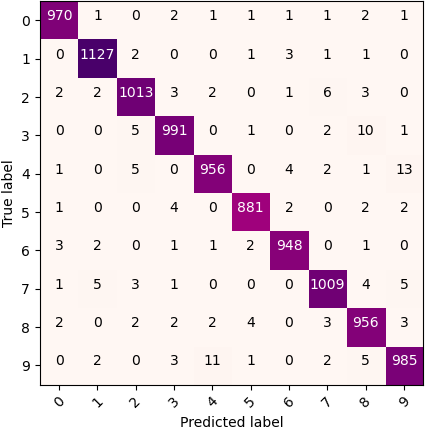}}
& 
\subcaptionbox{\label{fig:approx-l7}}
{\includegraphics[width=0.225\linewidth]{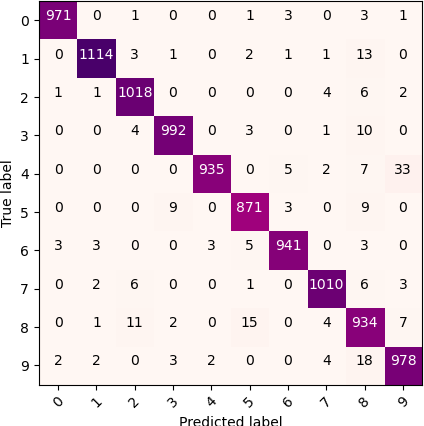}}
\end{tabular}
\caption{Confusion matrices of different algorithms for different numbers of hidden layers. In all plots, x-axis and y-axis are the predicted and true labels (0 to 9), respectively.}
\label{fig:matrices}
\end{figure*}

\begin{figure*}[bhpt]
\begin{minipage}[t]{0.32\linewidth}
\includegraphics[width=\linewidth]{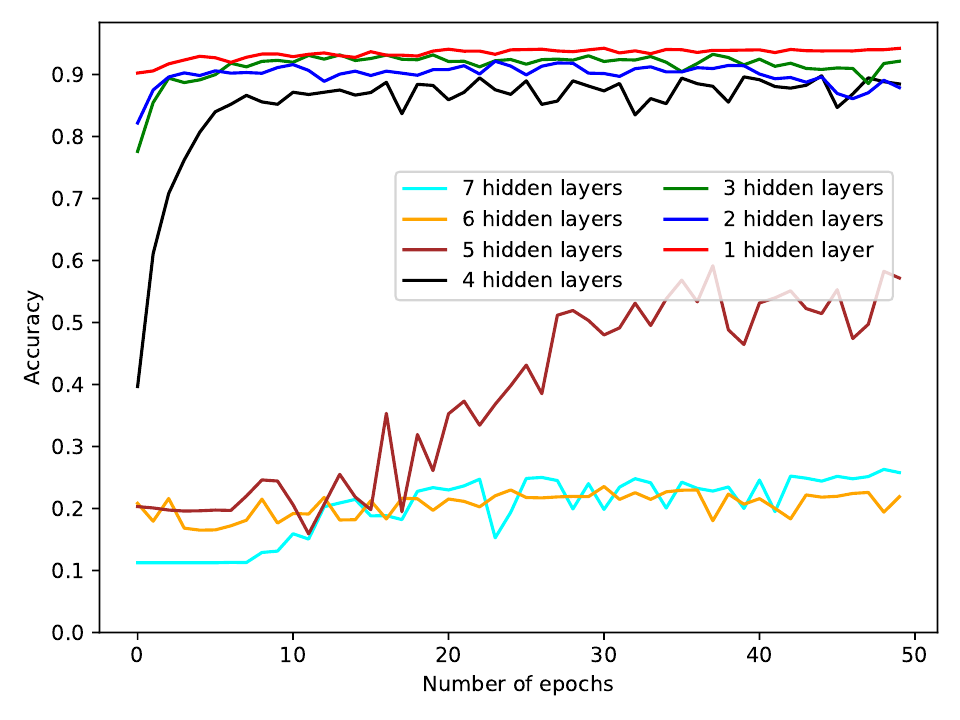}
\caption{\lsh{}: validation accuracy for different numbers of layers.}
\label{fig:acc-lsh-l}
\end{minipage}
\hfill
\begin{minipage}[t]{0.32\linewidth}
\centering
\includegraphics[width=\linewidth]{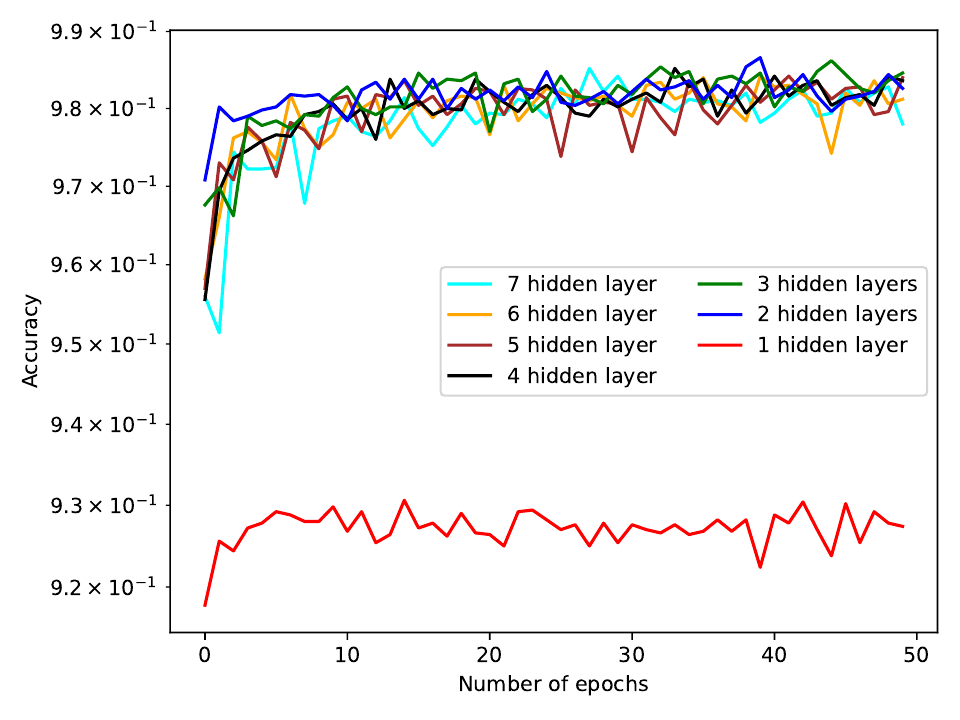}
\caption{\aprox{}: validation accuracy for different numbers of layers.}
\label{fig:acc-approx-l}
\end{minipage}
\hfill
\begin{minipage}[t]{0.32\linewidth}
\centering
\includegraphics[width=\linewidth]{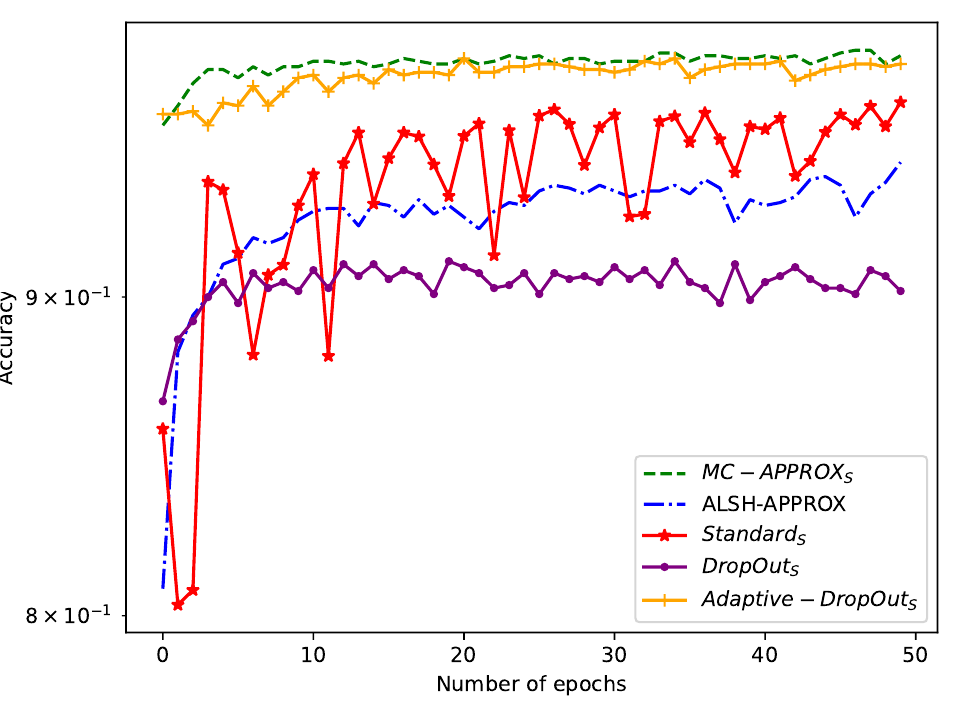}
\caption{Validation accuracies with 3 hidden layers. }
\label{fig:acc-3all}
\end{minipage}
\end{figure*}

\begin{figure*}[bhpt]
\begin{minipage}[t]{0.32\linewidth}
\centering
\includegraphics[width=1.1\linewidth]{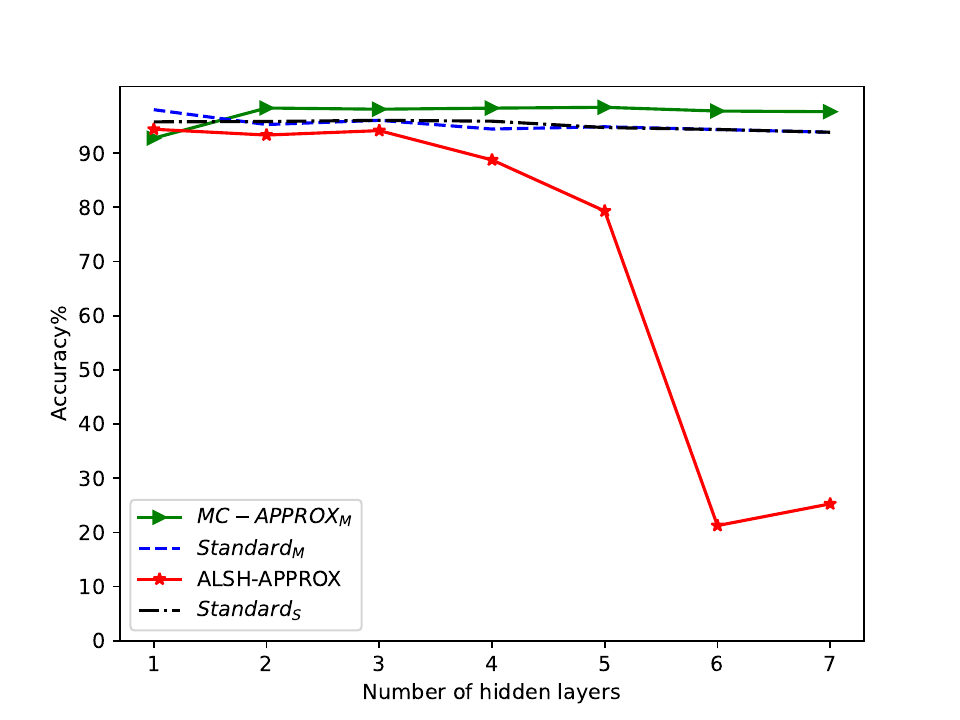}
\caption{Test accuracy for different numbers of hidden layers.}
\label{fig:acc-lsh-approx}
\end{minipage}
\hfill
\begin{minipage}[t]{0.32\linewidth}
\centering
\includegraphics[width=1.1\linewidth]{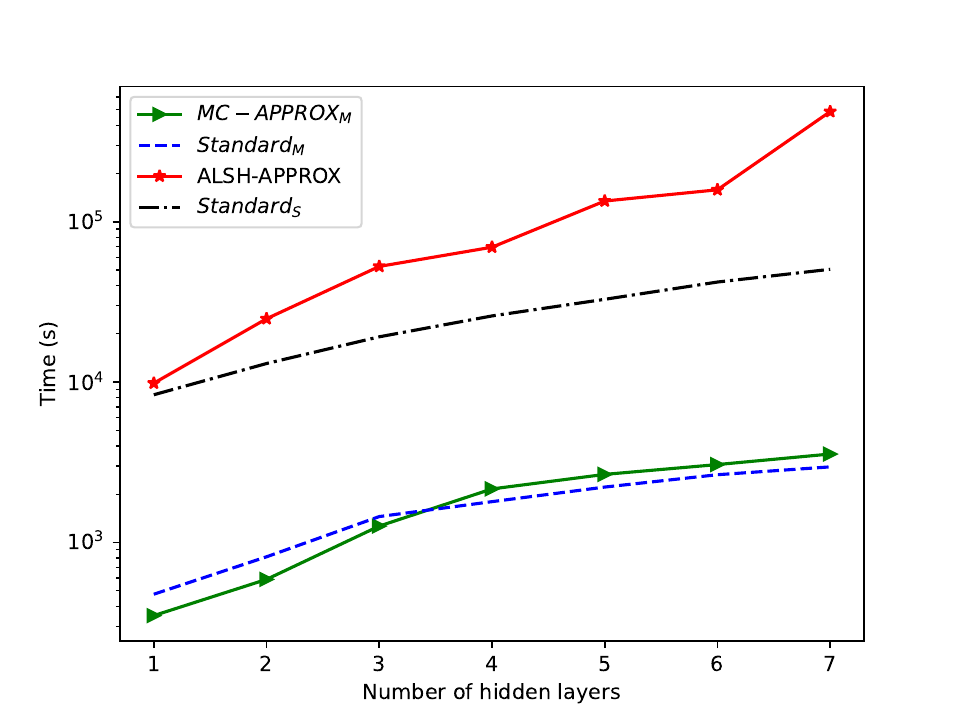}
\caption{Training time for different numbers of hidden layers.}
\label{fig:time-all-layers}
\end{minipage}
\hfill
\begin{minipage}[t]{0.32\linewidth}
\centering
\includegraphics[width=1.1\linewidth]{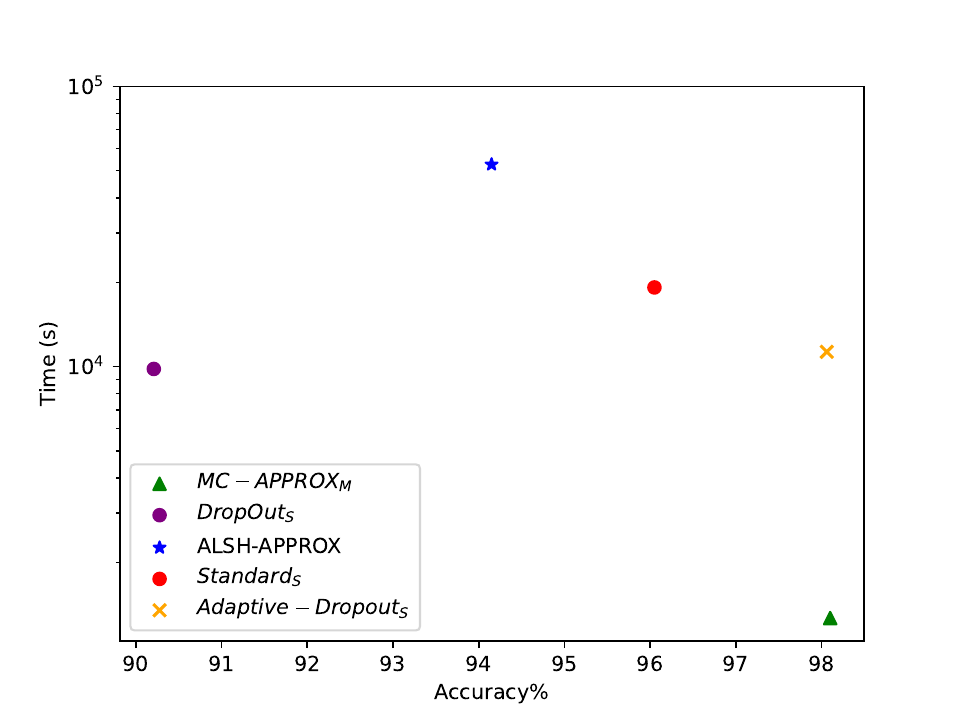}
\caption{Accuracy vs. training time (3 hidden layers).}
\label{fig:acc_vs_time}
\end{minipage}
\end{figure*}

\stitle{Baselines}
\reg{}\footnote{training the neural network without sampling.}, \dropout{}, and \addropout{} (the first three rows in Figure~\ref{fig:matrices}) are our baselines for comparisons. For \dropout{}, the nodes are sampled randomly with probability $p$, and for \addropout{}, $p$ is updated w.r.t the Bayesian posterior distribution of data input. In standard feedforward training, we expect to observe a decrease in generalization error over complex datasets as we add layers to the network and an increase in the ability to learn nonlinear functions. Clearly, this does not include the cases in which \reg{} overfits.

\stitle{\lsh{}}
The experiment results on \lsh{} (Row 4 in Figure~\ref{fig:matrices}) confirm a decrease in accuracy as the number of layers increases. In particular, Figures~\ref{fig:lsh-l5}~and~\ref{fig:lsh-l7} show a sharp decrease in performance on 5 to 7 layers. This is confirmed in Figure~\ref{fig:acc-lsh-approx}, where the accuracy of \lsh{} drops from 70.07\% to 25.14\% from 5 to 7 layers. Comparing \lsh{} with \sreg{} in Figure~\ref{fig:matrices}, though initially the two algorithms perform similarly on a small number of layers (Columns 1 and 2), the performance gap quickly increases with the number of layers --- confirming the \emph{lack of scalability} of \lsh{} for DNNs. This is also observed in Figure~\ref{fig:acc-lsh-approx}. We can see from Table~\ref{tab:acc} that, compared with \sdropout{}, \lsh{} performs better, but it does not outperform \sreg{} and \saddropout{}.

\stitle{\aprox{}}
\aprox{} is designed for use with mini-batch SGD. As we shall further investigate in \S\ref{sec:exp:hyper}, even though \saprox{} outperforms other methods evaluated (Table~\ref{tab:acc}), the runtime for large numbers of layers and datasets is so high that it is infeasible for computation-limited systems. This is reflected in Table~\ref{tab:time-mb1}. Therefore, as indicated in \S\ref{sec:exp_setup:defaults}, we use mini-batch SGD (with batch size 20) as the default setting in our experiments. The experiment results on \maprox{} are provided in the last row of Figure~\ref{fig:matrices}. \maprox{} shows equally good performance across different numbers of layers, confirming its scalability for DNNs. In particular, when varying the number of hidden layers (Figure~\ref{fig:acc-lsh-approx}), the minimum accuracy obtained by \maprox{} is 92.71\% for one hidden layer. Comparing the confusion matrices of \aprox{} with \addropout{} and \reg{} in Figure~\ref{fig:matrices}, we can see that performance is consistent across the three algorithms. As shown in Figure~\ref{fig:acc-lsh-approx}, in most cases, \maprox{} outperformed \mreg{} with 2--4\% difference in accuracy. This is also evident in Table~\ref{tab:acc}, where \maprox{} and \saprox{} outperformed other algorithms on the MNIST and Fashion-MNIST datasets with 3 hidden layers. To further assess the scalability of \maprox{} in deeper networks, we conducted additional experiments using 10- and 20-layer networks. The obtained accuracy rates of 97.32\% for 10 layers and 95.71\% for 20 layers validate our earlier findings. Finally, looking at Figure~\ref{fig:acc-lsh-approx}, the only case in which \maprox{} fails to obtain the highest accuracy compared with \lsh{} is when the model has only 1 hidden layer. \lsh{} performs (94.4\%) slightly better than \maprox{} (92.71\%).

\begin{table*}[bhpt]
\centering
\small
\caption{Test accuracy (\%) for a network with 3 hidden layers.}
\begin{tabular}{ ccccccc }
\toprule
Dataset & \lsh{} & \maprox{} & \saprox{} & \sdropout{} & \saddropout{} & \sreg{} \\
\midrule
MNIST & 94.15 & 98.10 & \textbf{98.38}  & 90.21 & 98.06 & 96.46 \\
Kuzushiji-MNIST & 72.87 & 91.78 & \textbf{96.50} & 9.84 & 90.73 & 83.86 \\
Fashion-MNIST & 78.11 & 87.85 & \textbf{88.58} & 76.28 & 86.12 & 73.64 \\
EMNIST-Letters & 64.97 & 89.84 & \textbf{90.75} & 4.96 & 89.50 & 85.34 \\
NORB & 78.57 & 92.05 & \textbf{97.52} & 32.73 & 96.60 & 51.61 \\
CIFAR-10 & 10.31 & 73.26 & 62.11 & 67.85 & 75.55 & \textbf{93.02} \\
\bottomrule
\end{tabular}
\label{tab:acc}
\end{table*}

 \begin{table*}[pt]
\small
\caption{Training time per epoch (sec.) with 3 hidden layers and batch size 1 on MNIST.}
\centering
\begin{tabular}{cccccc} 
\toprule
Method & \lsh{} & \saprox{} & \sdropout{} & \saddropout{} & \sreg{} \\
\midrule
Total & $807.50 \pm 22.92$ & $422.23 \pm 3.36$ & $196.15 \pm 0.55$ & $225.85 \pm 1.91$ & $361.51 \pm 5.13$ \\
Feedforward & $168.02 \pm 3.34$ & $28.44 \pm 0.076 $ & $32.32 \pm 0.04$ & $59.03 \pm 0.10$ & $28.976 \pm 0.04$ \\
Backpropagation & $356.16 \pm 7.28$ & $110.98 \pm 0.69 $ &$131.02 \pm 0.57$ & $132.68 \pm 1.89$ & $61.88 \pm 1.02$ \\
\bottomrule
\end{tabular}
\label{tab:time-mb1}
\end{table*}

\begin{table*}[pt]
\small
\caption{Training time per epoch (sec.) with 3 hidden layers and batch size 20 on MNIST.}
\centering
\begin{tabular}{ccccc} 
\toprule
Method & \maprox{} & \mdropout{}  & \maddropout{} & \mreg{} \\
\midrule
Total & \textbf{$35.141 \pm 0.385$} & $42.818 \pm 0.108$  & $53.412 \pm 0.301$ & $39.310 \pm 1.087$ \\
Feedforward & $4.733\pm 0.054$ & $4.789 \pm 0.010$ & $10.539 \pm 0.024$ & $4.213 \pm 0.052$ \\
Backpropagation & $11.323 \pm 0.071$ & $7.957 \pm 0.029$ & $8.493 \pm 0.271$ & $23.143 \pm 1.080$ \\
\bottomrule
\end{tabular}
\label{tab:time-mb20}
\end{table*}

\subsection{Training Time}
After studying the impact of network depth on accuracy, we next turn our attention to efficiency (i.e., training time). The results from all five methods on three hidden layers, on one CPU and without parallelization, are summarized in Table~\ref{tab:time-mb1} and Table~\ref{tab:time-mb20}.
Even though from Table~\ref{tab:time-mb20} it is evident that \maprox{} significantly outperforms other approaches with batch size 20, \saprox{} is slower than \saddropout{}, \sreg{}, and \sdropout{}. Similarly, Figure~\ref{fig:time-all-layers} shows the runtime comparison of \maprox{} and \lsh{} with \sreg{} and \mreg{} (baseline) for different numbers of layers. The results confirm the superiority of \maprox{} over the other algorithms up to 3 layers. Note that the observed increase in the training time of \addropout{} per epoch in comparison to \reg{} can be attributed to the additional computational overhead of the construction of dropout masks and their subsequent multiplication with the weight matrices in each layer (Table~\ref{tab:time-mb20}).

\stitle{\lsh{}} 
Before discussing our efficiency results from \lsh{}, let us emphasize that \lsh{} is a scalable algorithm that significantly 
\emph{benefits from parallelization}  \citep{ScaleableLSH}. During training, the hash table construction, computing hash signature, querying hash tables, and updating weight vectors by sparse weight gradients are parallelized, which makes the algorithm fast with parallel processing using multiple processing units. We refer interested readers to the details and results of \citet{ScaleableLSH}.

\lsh{} needs to reconstruct the hash tables after a set of weight updates. Following the original implementation of \lsh{}, in our experiments, for the first 10000 training data points, we reconstruct hash tables every 100 images. Then gradually, we expand the set to avoid time-consuming table reconstructions and update the tables every 1000 images. This helps with directing the gradient and decreasing the redundancy in the dataset. Table~\ref{tab:time-mb1} shows that \lsh{} exhibits slower performance compared to all other methods in the statistic setting. 

Also, in models with additional hidden layers, we can see an increase in training time as shown in Figure~\ref{fig:time-all-layers} that is larger than other methods on the same network structure. This is consistent with the results presented by \citet{ScaleableLSH}, where it is shown that the runtime significantly drops with parallelization. Evidently, as shown by \citet{ScaleableLSH}, parallelization has no impact on the accuracy of \lsh{}. In other words, the accuracy scalability results of \lsh{} discussed in the previous section are independent of parallelization.

\stitle{\aprox}
Due to the sampling ratio of \aprox{} ($p \approx 0.1$), \aprox{} performs more atomic scalar operations than \lsh{} with 5\% of the nodes. However, based on the results from Figure~\ref{fig:time-all-layers} and Table~\ref{tab:time-mb1}, \maprox{} and \saprox{} are around 20 and 2 times faster than \lsh{}, respectively. This is because of the significantly lower overhead of \aprox{} compared to \lsh{}. Figure~\ref{fig:time-all-layers} shows that \maprox{} outperforms other methods while maintaining a training time comparable to \mreg. Notably, despite the additional computational load in the backpropagation process with \maprox{}, it achieves backpropagation times that are twice as fast as those of \reg{} on the same architecture (Table~\ref{tab:time-mb20}). For networks with fewer than 4 hidden layers, \maprox{} is slightly faster than \mreg{}, and for deeper networks, the opposite is true. Nevertheless, Figure~\ref{fig:acc-approx-l} confirms the higher accuracy of the \maprox{} for various numbers of layers on MNIST. From Table~\ref{tab:time-mb1}, it is evident that \sreg{} is faster than \saprox{}. The reason is that, in order to estimate probabilities based on Equation~\ref{p_i} for each mini-batch, \aprox{} makes a pass over the mini-batch and the matrix $W$. As a result, in SGD, where mini-batch size is one, the overhead time and the time to approximate the matrix multiplication exceeds the required time for exact multiplication (\sreg). In \S\ref{mem-analysis}, we provide a thorough analysis of the runtime comparison between \dropout{}, \addropout{}, and \maprox{}.

Finally, Figure~\ref{fig:acc_vs_time} shows that \aprox{} performs better in both speed and accuracy compared to other methods.

\begin{figure*}
\begin{minipage}[t]{0.32\linewidth}
\centering
\includegraphics[width=\linewidth]{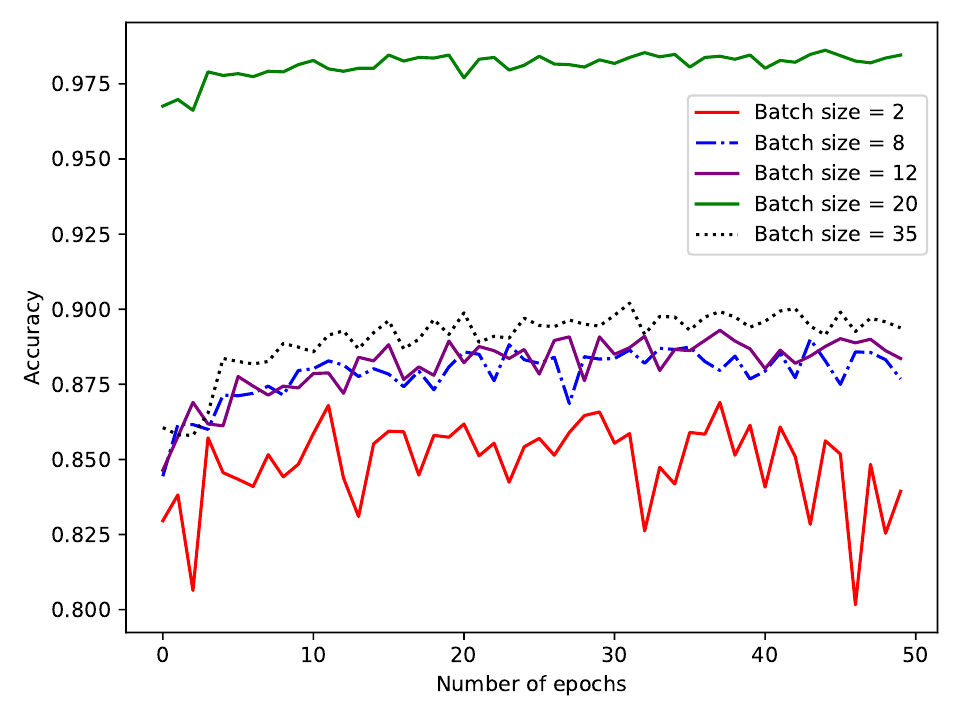}
\caption{\aprox{}: validation accuracy for different batch sizes (learning rate = 0.001).}
\label{fig:diff-bs-approx-accu}
\end{minipage}
\hfill
\begin{minipage}[t]{0.32\linewidth}
\centering
\includegraphics[width=\linewidth]{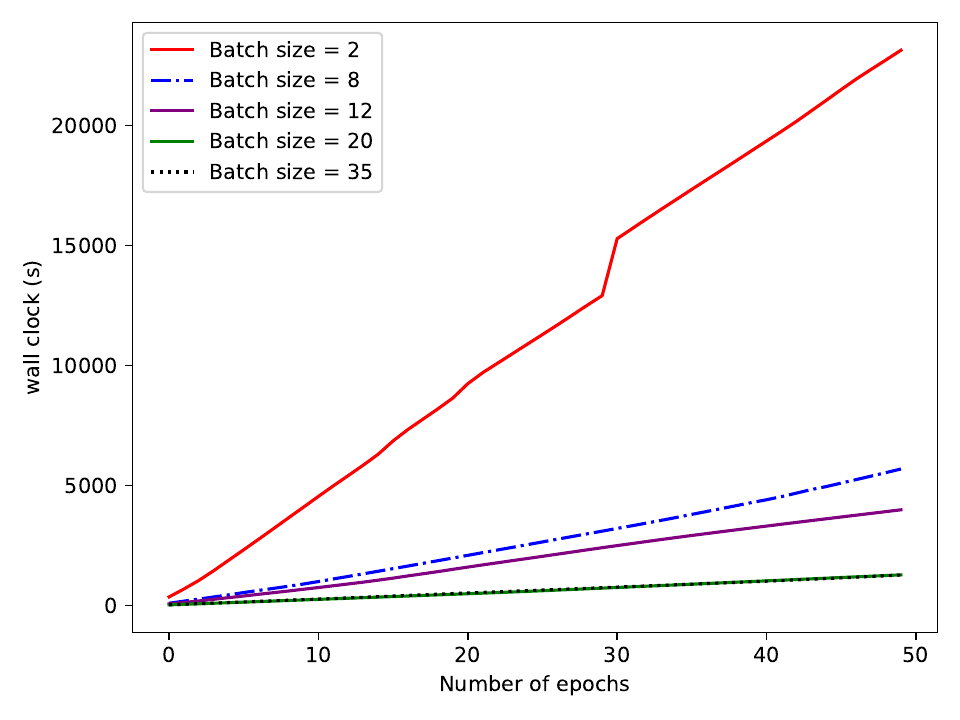}
\caption{\aprox{}: training time for different batch sizes (learning rate = 0.001).}
\label{fig:diff-bs-approx-time}
\end{minipage}
\hfill
\begin{minipage}[t]{0.32\linewidth}
\includegraphics[width=\linewidth]{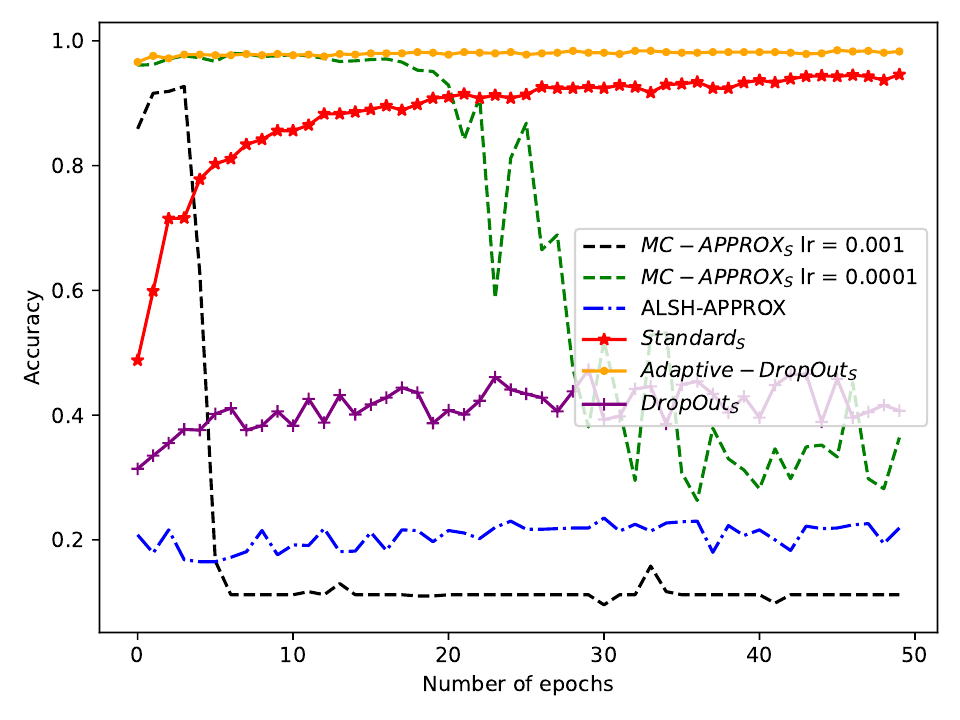}
\caption{Validation accuracies with 7 hidden layers.}
\label{fig:acc-7all}
\end{minipage}
\end{figure*}

\subsection{Hyperparameters}\label{sec:exp:hyper}
\aprox{} is designed for mini-batch stochastic gradient descent, as it uses the set of samples in the mini-batch for error estimation in Equation~\ref{p_i} to identify which rows in $W^k$ to select. In SGD, the estimations would be made using only one sample, and hence are not reliable. As a result, while \aprox{} performs well for mini-batch SGD with a large-enough batch size (20), its efficiency drops significantly for SGD. 
In the stochastic setting, \saprox{} is slower than \sreg{} (Table~\ref{tab:time-mb1}) and is prone to overfitting.

To evaluate this, we run experiments on stochastic setting where batch size is set to 1 (\saprox{}) along with different batch sizes. The results are provided in Figures~\ref{fig:acc-3all},~\ref{fig:diff-bs-approx-accu},~\ref{fig:diff-bs-approx-time},~and~\ref{fig:acc-7all} and Tables~\ref{tab:time-mb1}~and~\ref{tab:time-mb20}. The results in Figure~\ref{fig:diff-bs-approx-accu} show the decrease in accuracy for small mini-batches with the same learning rate: the accuracy drops from 98\% to 84\%. The optimal learning rate to use is smaller for smaller batch sizes~\citep{bs}, so to resolve the overfitting in \saprox{}, we decreased the learning rate from $10^{-3}$ to $10^{-4}$. As shown in Table~\ref{tab:acc} and Figure~\ref{fig:acc-3all}, \saprox{} performs well in terms of accuracy. Moreover, as the model gets more complex by adding hidden layers (Figure~\ref{fig:acc-7all}) and the variance increases, the model is unable to avoid overfitting even with decreasing the learning rate. We discussed in \S\ref{sec:method:fa} that \aprox{} chooses row-column pairs from matrices such that the columns are from the first input matrix $X \in \Reals^{m\times n} $ and the corresponding rows are sampled from the second matrix $W \in \Reals^{n\times n}$. Figure~\ref{fig:acc-7all} provides evidence of the lack of scalability of \saprox{} for deep networks. This can be attributed to the specific sampling procedure employed by \saprox{}. In the stochastic setting, $X_{:,j}$ is reduced to a singleton set. As a result, the time overhead increases, while the reliability of probability estimation for row-column selection decreases.


\subsection{Memory Analysis}\label{mem-analysis}

Our computing architecture is equipped with a hierarchical memory layout that includes a 384 KB Level 1 (L1) cache, 12 MB Level 2 (L2) cache, and a 19.3 MB Level 3 (L3) cache. We find that the \lsh{} algorithm initially requires 24 MB of memory for table setup and subsequently expands by 3,731.9 MB by the end of training. This substantial increase in memory usage indicates that the data that is not cache resident contributes to significant system overload. This plays a crucial role in the execution time of the \lsh{} algorithm, as it necessitates frequent data retrieval from slower, non-cache memory sources.
\aprox{} demonstrates a 45 MB increment in memory usage by the end of training.
Notably, memory usage decreases to 16.4 and 15.7 MB for \dropout{} and \addropout{}, respectively. However, the I/O traffic remains high, and the runtime still increases similarly to \aprox{}. 
This phenomenon is attributable to cache misses during training with \dropout{} and \addropout{}. As \citet{memory-DNN} illustrate, minimizing storage I/O per epoch is more critical than caching data. If data is evicted from the cache before use or is never cached, accessing it from memory increases the I/O overhead. Cache profiling results indicate a roughly 24\% increase in cache misses with \dropout{} and a 27\% increase with \addropout{} compared to \aprox{}, explaining the rise in runtime.

\tightsection{Lessons and Discussion}\label{sec:discuss}

It is important to recognize that no single method serves as the optimal solution for optimizing training across all architectures and datasets. 
On the positive side though, our theoretical analysis and experimental evaluations confirm the effectiveness of existing approaches under various settings.
Specifically, our experimental findings discussed in \S\ref{sec:results} confirm the superiority of \aprox{}. When used on MLPs with an appropriately sized mini-batch larger than one, it offers enhanced accuracy, speed and memory efficiency. \aprox{} effectively approximates the inner product at each layer by sampling nodes from the previous layer, and utilizing a batch of data point vectors significantly improves the sampling quality. Another notable observation is that \aprox{} decreases the number of cache misses which significantly contributes to its increased speed.
Moreover, our theoretical analysis in \S~\ref{sec:analysis} demonstrate a small estimation error for \lsh{}, when the number of hidden layers is small. This was observed in our experiment in Figures~\ref{fig:lsh-l1}, Figure~\ref{fig:lsh-l3}, and Table~\ref{tab:acc}. This confirm the effectiveness of \lsh{}, especially with parallel computing~\cite{ScaleableLSH}.

\subsection{Feedforward Approximation Scalability}\label{scaleable-approx}
A major takeaway in this paper is the negative impact of approximation during the feedforward process. First, in \S\ref{sec:analysis}, we theoretically analyzed the error propagation effect from layer to layer. In particular, Theorem~\ref{th:er} highlights the \emph{exponential} increase of gradient estimation error in \lsh{} as the number of hidden layers increases. As a result, for neural networks with more than 3 hidden layers, the error can become even larger than the estimation value. Consequently, the gradient estimation can become utterly arbitrary, resulting in completely inaccurate weight updates during the backpropagation step, which leads to an inaccurate model. For \aprox{}, \citet{FasterNN} did not observe consistent behavior across different models in their experiments. Interestingly, the authors provide the theoretical result that (i) approximating both feedforward and backpropagation operations leads to biased estimates, and (ii) approximating only feedforward operations is unbiased. However, their method for feedforward approximation \emph{failed} in experiments~\citep{FasterNN}. As a result, \aprox{} (the algorithm evaluated in this paper) only adds approximation during backpropagation. We observed similar results for \lsh{} in \S\ref{sec:results}. It is evident in Figures~\ref{fig:matrices},~\ref{fig:acc-lsh-approx},~\ref{fig:acc-7all},~and~\ref{fig:acc-lsh-l} that \lsh{} failed to scale for DNNs, confirming our theoretical analysis in Theorem~\ref{th:er}.

We observed that backpropagation generally took significantly longer than the feedforward step (Tables~\ref{tab:time-mb1}~and~\ref{tab:time-mb20}). Fortunately, backpropagation optimization can significantly improve training time~\citep{DL, bprop}; introducing approximation only during the backpropagation step has the potential to significantly reduce training time. Nevertheless, designing scalable sampling-based algorithms that introduce approximation on both feedforward and backpropagation in DNNs on CPU machines remains an open research direction.


\subsection{DNNs and Small Batch Size}
As observed in our theoretical analysis and experiment results, \lsh{} does not scale to DNNs with more than a few hidden layers. \aprox{}, on the other hand, scales for DNNs, but it is designed based on mini-batch gradient descent and performs well when the batch size is reasonably large (greater than 10). However, the performance of \aprox{} quickly drops for small batch sizes under the same setting. In particular, we observed a swift drop in time efficiency (Figure~\ref{fig:diff-bs-approx-time}) under SGD. While \saprox{} demonstrated a high accuracy in some cases (Table~\ref{tab:acc}), this comes at a cost of a significant increase in training time (even compared to \sreg) and a high risk of overfitting, especially for deep networks (Figure~\ref{fig:acc-7all}). Thus, designing scalable sampling-based algorithms for SGD on CPU remains an open research direction.

\begin{figure}[!bth]
    \includegraphics[width=0.65\linewidth]{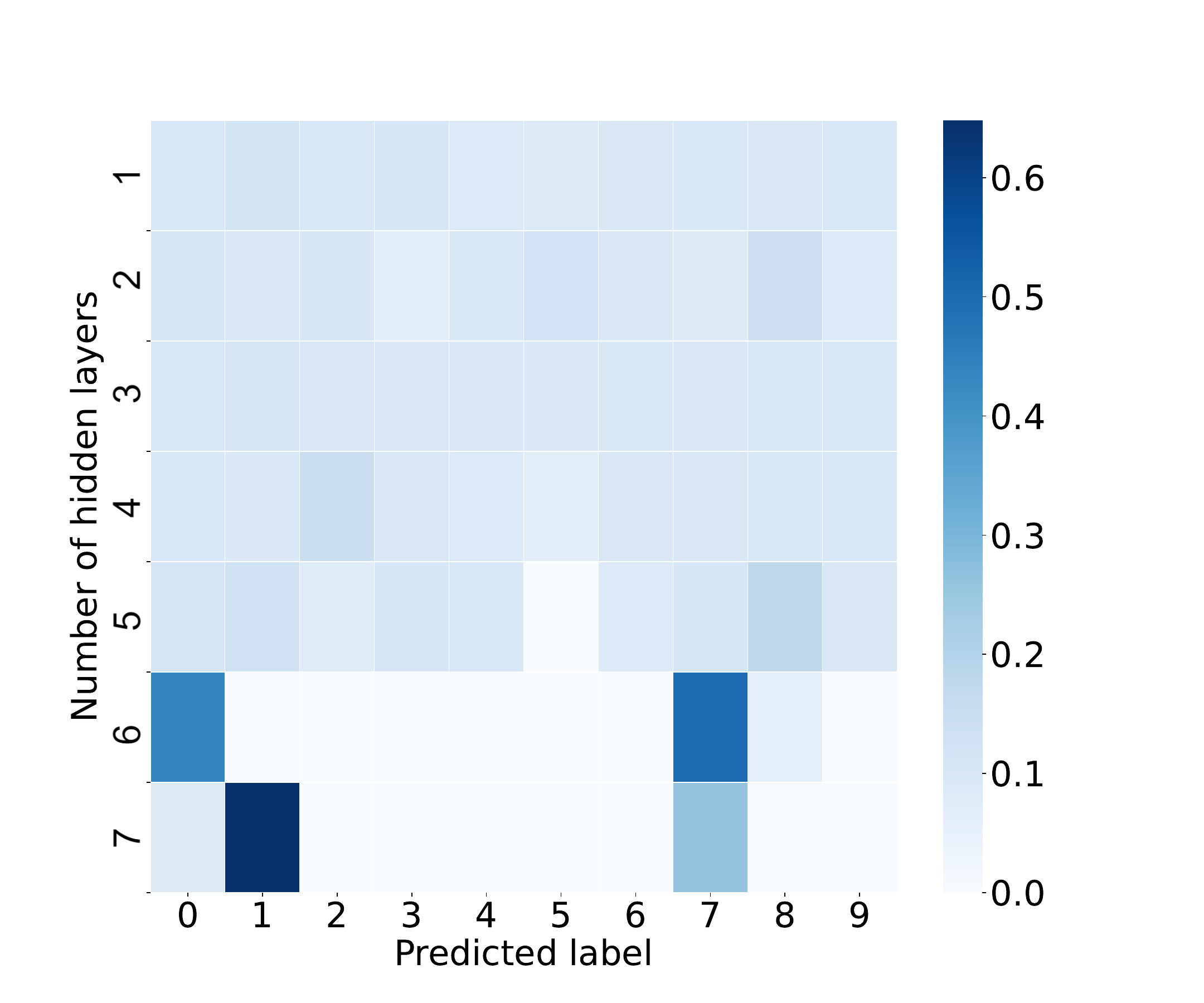}
    \caption{\lsh: impact of network depth on label prediction ratio distribution.}
    \label{fig:labelDist}
\end{figure}

\subsection{\lsh{} Prediction in DNNs}\label{lsh-analysis}
We would like to conclude this section with an interesting observation on \lsh{}. Let us consider the confusion matrices of \lsh{} in Figure~\ref{fig:matrices} (Row~4) once again. From Figure~\ref{fig:lsh-l1}, one can confirm that (i) there is no class imbalance in the test set (approximately same number of samples in each class), and (ii) having high accuracy, the model predictions are uniformly distributed across different classes (approximately same number of samples predicted to be in each class). On the other hand, in Figure~\ref{fig:lsh-l7}, it seems not only the model is inaccurate, but interestingly only a few labels from the class labels are generated in predictions (all samples are labeled as either 0, 1, or 7). We repeated the experiment multiple times and observed consistent behavior. Furthermore, comparing Row~2 (\dropout{}) with Row~4 (\lsh{}) in Figure~\ref{fig:matrices}, we note that, while both methods failed to scale with the number of hidden layers, \dropout{} maintains the label diversity in its prediction, which demonstrates randomness. To better present this, in Figure~\ref{fig:labelDist}, we provide the ratio of test-set samples predicted for each label (column) for networks with various numbers of hidden layers. It is clear that, while initially the label prediction distribution is uniform, as the number of layers increases, the predictions concentrate around a few arbitrary classes. The reason is that, while training the model using \lsh{}, as the gradient estimation error increases for deeper networks, a small subset of nodes remains active in deeper layers, regardless of the input sample. As a result, the set of edges for which the weights get updated remains almost the same. Therefore, when predicting the label of an input sample, the same set of nodes is ``activated'', resulting in a small set of predictions generated.

\subsection{Optimal Choice of Training Method}
 Building on the insights and lessons from our study, we present a decision tree to guide users in selecting the most effective method for training DNNs on CPU machines:

\begin{center}
\small
\vspace{-2.5mm}\begin{tikzpicture}[level distance=1.1cm, level 1/.style={sibling distance=2.8cm}, level 2/.style={sibling distance=2.8cm}]
\node {Setting}
    child {node {\# Layers}
        child {node {Parallel Computing}
            child {node {\lsh{} (\cite{ScaleableLSH})} edge from parent node [left, draw=none] {yes}}
            child {node {{\bfseries Open Problem} (Tab.~\ref{tab:time-mb1})} edge from parent node [right, draw=none] {no}} edge from parent node [left, draw=none] {Shallow ($\leq 4$)}}
        child {node {{\bfseries Open Problem}\footnotemark{} (Fig.~\ref{fig:matrices})} edge from parent node [right, draw=none] {Deep ($> 4$)}} edge from parent node [left, draw=none] {SGD}}
    child {node {\aprox{} (\S\ref{sec:exp:hyper}, Tab.~\ref{tab:time-mb20})} edge from parent node [right, draw=none] {Mini-Batch SGD}};
\end{tikzpicture}
\end{center}

Our primary objective is to identify methods that deliver performance nearly comparable to standard training in terms of accuracy while ensuring faster execution on our CPU system across various settings. Our results, presented in Section \ref{sec:exp:hyper} and Table \ref{tab:time-mb20}, indicate that \aprox{} surpasses other methods in mini-batch settings. In stochastic environments, the effective method varies with the network depth. \footnotetext{``Open problem'' refers to settings where existing sampling-based algorithms failed; further research is needed to design algorithms for those settings.}
The experimental evaluation by
\citet{ScaleableLSH} confirms that \lsh{} scales effectively using parallel computing with multi-processors up to $2^6$ processors, for up to four layers.
Therefore, we believe \lsh{} performs optimally up to four layers under such conditions.

\tightsection{Conclusion}\label{sec:conclusion}
Many of the advanced technologies originally developed for addressing big data challenges have been extended to solve scalability complexities across various domains.
In this paper, we evaluated one of these settings, where sampling-based techniques were proposed for training DNN on CPU machines with limited resources.
To this end, we made connection between two sampling-based research directions that can be viewed as matrix multiplication approximations.
We provided theoretical analyses, followed by extensive empirical evaluations.

Our results demonstrate a correlation between the number of hidden layers and approximation error in DNNs under hashing-based methods. In addition, we provided insights into the performance of fast training methods in different settings and highlight areas for further research.

As a final note, energy consumption during DNN training raises environmental concerns. Recent studies have explored the significant carbon footprint associated with large-scale neural networks, primarily due to their energy consumption~\citep{carbon-fprint, green-AI}. One interesting direction for future work is to study the impact of sampling-based techniques on energy efficiency.

\begin{acks}
    This work was supported in part by the National Science Foundation, Grant No. 2107290, and the UIC University Fellowship.
    The authors would like to thank the anonymous reviewers and the meta-reviewer for their invaluable feedback.
\end{acks}

\bibliographystyle{ACM-Reference-Format}
\bibliography{ref-src,refs}


\end{document}